\newcommand{\R}{\mathbb{R}}
\newcommand{\pr}{\mathbb{P}}
\newcommand{\cD}{\mathcal{D}}
\newcommand{\cO}{\mathcal{O}}
\newcommand{\cN}{\mathcal{N}}
\newcommand{\cZ}{\mathcal{Z}}
\newcommand{\cT}{\mathcal{T}}
\newcommand{\cX}{\mathcal{X}}
\newcommand{\cW}{\mathcal{W}}
\newcommand{\la}{\lambda}
\newcommand{\tla}{\tilde{\lambda}}
\newcommand{\bv}{\mathbf{v}}
\newcommand{\tbv}{\tilde{\mathbf{v}}}
\newcommand{\bu}{\mathbf{u}}
\newcommand{\bw}{\mathbf{w}}
\newcommand{\bx}{\mathbf{x}}
\newtheorem{cor}[theorem]{Corollary}
\newtheorem{thm}{Theorem}[section]
\newtheorem{defn}{Definition}[section]
\begin{document}

\title{Multi-Slice Clustering for $3$-order Tensor Data}
\author{Dina Faneva Andriantsiory, Joseph Ben Geloun, Mustapha Lebbah}
\institute{
	Laboratoire d'Informatique de Paris Nord (LIPN) \\
	Université Sorbonne Paris Nord}

\maketitle

\begin{abstract}
	Several methods of triclustering of three dimensional data require the specification of the cluster size in each dimension. This  introduces a certain degree of arbitrariness. To address this issue, we propose
	a new method, namely the multi-slice clustering (MSC) for a $3$-order tensor data set.
	We analyse, in each dimension or tensor mode, the spectral decomposition of each tensor slice, i.e. a matrix. Thus, we define a similarity measure between matrix slices up to a threshold (precision) parameter, and from that, identify a cluster. 
	The intersection of all partial clusters provides the desired triclustering. The effectiveness of our algorithm is shown on both synthetic and real-world data sets. 
\end{abstract}

\section{Introduction}

Consider $m_1$ individuals with $m_2$ features and 
collect the data for each individual-feature pair at $m_3$ different times.  This is an example of a collection of data set with $3$ dimensions. Multidimensional data of this kind arise in sundry contexts such as neuroscience \cite{dataneuroscience}, genomics data \cite{datagenomics1,datagenomics2},  computer vision \cite{datacomputerVision} and several other domains \cite{HRMSC2018}.  
A convenient way to encode such data is given by  a $3$-order tensor. 
Because of their augmenting complexity, the mining of higher dimensional data naturally proceeds with the identification of subspaces with specific features. Clustering is one of the most popular unsupervised machine learning methods for extracting relevant information such as structure similarity in data.
Thus, several computational methods for clustering multidimensional data, from matrices  to higher order tensors, were introduced, see for example the review \cite{HRMSC2018}.
For a detailed account of various available methods closer to our present work, see 
\cite{MichaelKathrin2008NTF,Kolda2009,FeiziNIPS2017,DinaTB,TCarson2017,BachLearningSpectralClustering}.

A great deal of clustering techniques applied to tensors are based on 
the formalism of Kolda and
Bader \cite{Kolda2009}.
Following that trend,  
Feizi et al. in \cite{FeiziNIPS2017} proposed four algorithms to select the highly correlated  trajectories over one dimension (figure \ref{fig:tfs}). An extension of that work to select multiple clusters of trajectories is proposed by Andriantsiory et al. in  \cite{DinaTB}.
Later, Wang and collaborators \cite{WangMultiwayClustering2019}
set up partitions in each mode of the tensor and considered the intersection of three clusters from different modes as a block represented by a mean
(Multiway clustering). 
Another method is
the so-called CP+k-means, see for instance Papalexakis et al. \cite{Papalexakis2013}, 
and Sun and Li \cite{WillSun2019}. One starts with a CP-decomposition of the tensor into $R$ rank-1 tensors, and then apply the k-means clustering \cite{Wilkin2007} to the rows of each matrix factor. Similarly, the Tucker+k-means \cite{SunTuckerk_means2006}
uses a Tucker decomposition \cite{tucker1966} of the tensor and then apply a usual k-means clustering. 
We will be interested in the improvements of these algorithms in the present work. 
(Note that the Multiway clustering of Wang et al \cite{WangMultiwayClustering2019} 
will approximately behave
like the Tucker+k-means in our setting.)

In the usual sense, the triclustering tensor problem aims at computing a triple of subsets of features, individuals and times where the entries of the tensor are strongly similar and have a structure of block (a sub-cube) with 
higher means (figure \ref{fig:triclustering}). 
It is considered as a natural generalization
and next step of the tensor biclustering method.

In general, the clustering algorithms depend on the number of clusters or the cluster size as data input. Nevertheless, for real data, these inputs
might be very difficult to assess from the outset. 
Another important open issue about 
the inputs (cluster sizes) concerns 
the comparison of different 
clustering results from different parameter inputs. This leads to difficulties
in the actual choice of their  value. 
We must mention that the usual clustering
algorithms only guarantee a clear distinction between a given cluster 
and the rest of the data, without guaranteeing the strong correlation
within a cluster. One might ask both
properties: separation and strong
correlation within the cluster.

In this paper, we first bring an answer to the issue of the parameter sizes.   Our method
concentrates on one dimension of the tensor and studies each matrix slice of that mode. 
We select the top-eigenvectors of all slices and study a covariance-type matrix associated with those. 
This guides us towards the selection of an index subset  
with highly similar features.  
The input sizes of the cluster are not needed as input of the algorithm. Instead, 
we set a threshold parameter which gauges the similarity within the cluster. Our method leads to a well-defined triclustering with 
performance comparable to that of 
Tensor Folding Spectral (TFS) method \cite{FeiziNIPS2017} and of the Tucker+k-means (for $k=2$). 
Finally, we show experimentally, using a similarity index measure  within the cluster, that 
all vectors selected are always strongly 
correlated, a feature not necessarily shared by the other algorithms. 

The paper is organized as follows: 
in section \ref{pbmformulation}, we address the problem formulation. The description of our method and algorithm is set in section \ref{method}. Then, in section \ref{experiments}, 
we apply our algorithm on synthetic data to compare the coherence of the theory and 
compare its performance with those of three known clustering algorithms. We also apply the algorithm on a real dataset and extract a tri-cluster. Section \ref{ccl} summarizes our results. The paper has a companion  supplementary material  that gathers the proofs of our main theorems
that is put at the end 
of this manuscript. 

	\begin{figure}[h]
	\centering
	\begin{subfigure}[b]{0.4\textwidth}
		\centering
		\includegraphics[width=\textwidth]{./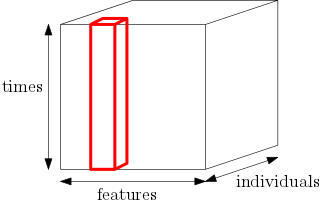}
		\caption{Tensor biclustering}
		\label{fig:tfs}
	\end{subfigure}
	\begin{subfigure}[b]{0.4\textwidth}
		\centering
		\includegraphics[width=\textwidth]{./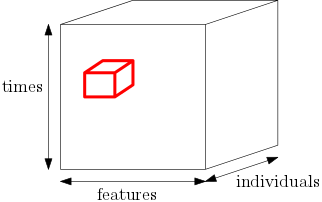}
		\caption{Triclustering}
		\label{fig:triclustering}
	\end{subfigure}
	\caption{(a) The tensor biclustering model and (b) The tensor triclustering model.}
	\label{fig:cluster}
\end{figure}

\section{Problem formulation}\label{pbmformulation}

\noindent{\bf Notation --}
We use $\cT$ to represent the 3-order tensor data and use Matlab notation in next manipulations. For the matrices, we use  capital letters: $M, C,\cdots$, and $\|M\|$ is  the operator norm of $M$. The lowercase boldface letters $\bx, \bv, \cdots$ represent vectors and $\|\bx\|_2$ is the euclidean norm of the vector $\bx$. For any set $J$, $|J|$ denotes the cardinality of $J$ and $\bar{J}$ denotes the complementary of $J$ in a larger set. For an integer $n>0$, we denote $[n] = \{1,\cdots,n\}$.  The asymptotic notation $a(n) = \cO(b(n))$ (res. $a(n) = \Omega(b(n))$) means that, there exists a universal constant $c$ such that for sufficiently large $n$, we have $|a(n)| \le c b(n)$ (resp. $|a(n)| \ge  c b(n)$).

Let $\cT = \cX + \cZ$ with $\cT\in\R^{m_1\times m_2\times m_3}$, where $\cX$ is the signal tensor and $\cZ$ is the noise tensor. Consider

\begin{equation}
	\cT = \cX + \cZ = \sum_{i = 1}^{r}\gamma_i\, \bw^{(i)}\otimes \bu^{(i)}\otimes\bv^{(i)} + \cZ \label{eq:problem}
\end{equation}
where $\forall i, \gamma_i>0$ stands for the signal strength,  $\bw^{(i)}\in \R^{m_1}, \bu^{(i)}\in\R^{m_2}$ and $\bv^{(i)}\in\R^{m_3}$ are normed vectors, and $r$ is a number of rank-1 tensor (vector) decomposition of the signal tensor. In this work, we restrict to the case $r = 1$.
\begin{defn}[Triclustering]
	The problem of tensor triclustering aims  at computing a triple $(J_1, J_2, J_3)$ of index sets  with $J_i\subset[m_i],i=1,2,3$, where the entries of $\cT$ are 
	highly correlated (similar).
\end{defn}
To simplify notation, we drop the superscripts $(i)$ from $\bv^{(i)}$ and from the other vectors.  For every $(j_1,j_2,j_3)\in J_1\times J_2\times J_3, |\bw(j_1)|\le \delta, |\bu(j_2)|\le \delta$ and $| \bv(j_3) |\le \delta$ for a constant $\delta >0$ and $\cX_{j_1,j_2,j_3}=0$ for each $(j_1,j_2,j_3)$ outside $J_1\times J_2\times J_3$. 
Concerning the noise mode, for $(j_1,j_2,j_3) \in \bar{J}_1\times \bar{J}_2\times\bar{J}_3 $, we assume that the entries of $\cZ$ are i.i.d and have a standard normal distribution. 
This is the conventional noise model in unsupervised learning method for tensor data \cite{FeiziNIPS2017,Cai_2017}.

\section{Method}
\label{method}

\noindent{\bf Multi-slice clustering --}
Consider the triclustering problem \eqref{eq:problem}. Using 
the notation of Kolda and Bader \cite{Kolda2009}, let
\begin{equation}
	\label{tmatrix}
	T_i = \cT(i,:,:)=X_i + Z_i,\qquad i\in[m_1]
\end{equation}
be the horizontal (the first mode) matrix slice of the tensor $\cT$, with $X_i = \cX(i,:,:)$ and $Z_i = \cZ(i,:,:)$. Similarly, we can define the lateral (the second mode)  and the frontal (the third mode) matrix slices. A way to learn the embedded triclustering is to find the similarity of the slices in each mode. This yields an index subset  $J_i$ for each mode. Now, the collection of $J_i$ determines the tricluster.
\begin{defn}[Multi-slice clustering (MSC)] 
	The problem of multi-slice clustering of a $3$-order tensor $\cT$ aims at determining and gathering the indices of the matrix slices that are similar in each dimension of the tensor. 
\end{defn}
To illustrate the procedure, we will focus on the horizontal slices without loss of generality. 

We perform a spectral analysis of
\begin{equation}
	C_i = T_i^tT_i \,,  \quad i\in [m_1]. 
	\label{covariance}
\end{equation}
For each $i\in[m_1]$, $C_i$ represents the symmetric covariance matrix of the $i$-th horizontal slice. 

For $i\in J_1$, equation \eqref{eq:problem} shows that
\begin{equation}
	\label{cXX}
	C_i = (X_i + Z_i )^t(X_i + Z_i ) = X_i^tX_i + X_i^tZ_i + Z_i^tX_i + Z_i^tZ_i 
\end{equation}
Taking into consideration \eqref{eq:problem}
and \eqref{tmatrix}, a tensor calculation 
shows that the matrix 
$ X_i^tX_i  = \la_i \bv_i\bv_i^t$, 
where $\la_i = \gamma_i^2 (\bw(i))^2>0$ is the top eigenvalue
of  $ X_i^tX_i$ that corresponds to 
the top eigenvector $\bv_i$. We re-express 
$ C_i = \la_i \bv_i\bv_i^t + W_i $ 
with $W_i$ the remainder 
inferred from the equation \eqref{cXX}.
There is a perturbation 
bound that shows the norm
difference between 
the top eigenvectors of
$C_i$ and $\bv_i$, see
supplementary material, section \ref{app:prooflemmaCvv}.  

For $i\notin J_1$, 
the expression $C_i$ \eqref{covariance}  becomes the covariance of the noise slice:
$C_i = Z_i^t Z_i$
where each row of $Z_i$ is real and independently drawn form $\cN_{m_3}(0, I)$, the $m_3$-variate normal distribution with zero mean and the covariance matrix is the identity matrix $I$. The matrix $C_i$ has a white Wishart distribution $\cW_{m_3}(m_2, I)$ \cite{johnstone2001}. The largest eigenvalue distribution $\la_i$ of $C_i$ when $m_2\rightarrow \infty, m_3\rightarrow \infty$ and $\frac{m_2}{m_3}\rightarrow {\rm cst} >1$, obeys the limit in the distribution sense
\begin{equation}\label{convergenceDistribution}
	\frac{\la_i - \mu_{m_2m_3,1}}{\sigma_{m_2m_3,1}}\xrightarrow[]{\cD} F_1
\end{equation}
where $F_1$ is the Tracy-Widom (TW) cumulative distribution function (cdf), and 
\begin{eqnarray}
	\mu_{m_2m_3,1} &=& (\sqrt{m_2-1} + \sqrt{m_3})^2\crcr
	\sigma_{m_2m_3,1} &=& \sqrt{\mu_{m_2m_3,1}} \big(\frac{1}{\sqrt{m_2 - 1}}+\frac{1}{\sqrt{m_3}}\big)^{\frac{1}{3}}
\end{eqnarray}
As stated in \cite{johnstone2001},  the expression \eqref{convergenceDistribution} applies equally well if $m_2<m_3$ when $m_2\rightarrow \infty, m_3\rightarrow \infty$ and if the role of $m_2$ and $m_3$ are reversed in the expression of $\mu_{m_2m_3,1}$ and $\sigma_{m_2m_3,1}$. Although \eqref{convergenceDistribution} is true in the limit, it is also shown that it provides a satisfactory approximation from $m_2, m_3 \ge 10$. Our following computations lie above that approximation. 
In the next developments, we drop the subscript $(m_2m_3,1)$ in $\mu_{m_2m_3,1}$ and $\sigma_{m_2m_3,1}$, so we will only use $\mu$ and $\sigma$. 

The covariance matrix $C_i$ defines both the spread (variance) and the orientation (covariance) of our
slice data. We are interested in the vector that represents the covariance matrix and its magnitude. The idea is to find the vector that points into the direction of the largest spread of the data $(\la_i, \bv_i)$.

We now detail our strategy. 
Since the signal tensor has rank one, we undertake the analysis of the relation between two slices. 
We seek for criteria characterizing 
high similarity between 
the top eigenvectors and
top eigvenvalues of all slices. 
A first necessary condition 
for achieving a MSC is to target 
high correlation 
between the top eigenvectors
of the slices. 
To proceed, 
we build the covariance matrix
$C= (c_{ij})$  of the top eigenvectors  with signs removed. 
We claim that two slices 
$i$ and $j$  are similar if the
corresponding coefficient
$c_{ij}$ is large enough.
At this stage, we set up 
our precision parameter $\epsilon$
that determines the correlation
strength between slices and, at a threshold, 
call these slices $\epsilon$-similar. We introduce a vector $\mathbf{d}$
the coordinate $d_i$ of which 
is the marginal sum of $C$
associated with the slice $C_i$. 
Any slice $i$ that belongs to the cluster
can be regarded as a centroid. 
Any other slice $j$ in the same
cluster is highly similar 
to $i$ 
and therefore will contribute to $d_i$.
An increasing number
of such $j$ makes
$d_i$ larger. 
So, for $i$ in the cluster and $j$ outside the cluster, we have $d_i>d_j$ with high probability.
The next step amounts
to set up a notion of distance between 
the cluster and the rest. 
We consider this as a criterion to determine the initial cluster.  Then, we use the parameter $\epsilon$  to fit the right cluster.  

For each slice, the  relation  \eqref{covariance} holds for $i= 1,\cdots, m_1$.
Let $\underline\la_i$ be
the top eigenvalue
and $\tbv_i$ be the 
top eigenvector of $C_i$.
We construct the matrix 
\begin{equation}
	V = \begin{bmatrix}
		\tla_1\tbv_1 & \cdots & \tla_{m_1}\tbv_{m_1}	\end{bmatrix}
\end{equation}
where  we set $\tla_i$ to $ \underline\la_i/\la, \forall i\in[m_1]$, and $\la =  \max(\underline\la_1,\cdots,\underline\la_{m_1})$.
Let $C$ be the matrix with each entry $(c_{ij})_{i,j\in[m_1]}$ defined by 
\begin{eqnarray}
	c_{ij}  = | \langle \tla_i \tbv_i,\tla_j \tbv_j \rangle| = \tla_i\tla_j| \langle  \tbv_i, \tbv_j \rangle| \le 1
	\label{scalar_product}
\end{eqnarray}
$C$ is a symmetric matrix and its  columns are not unit vectors. Note also that $C$ is related to the covariance matrix $V^tV$ in which we take the absolute values of the entries.

\begin{defn}[$\epsilon$-similarity]  The slice $i$ and $j$ are called $\epsilon$-similar, if for a small $\epsilon>0$ we have,
	\begin{equation}
		c_{ij} \geq 1 - \epsilon/2 \label{select}
	\end{equation}
\end{defn}
Let $\mathbf{d} = (d_1, \cdots, d_{m_1})$ a real vector defined by the marginal sum in $C$: $\forall i \in [m_1] $
\begin{equation}
	d_i =  \sum_{j\in [m_1]} c_{ij} \label{26}
\end{equation}
Using \eqref{select} and \eqref{26}, the elements of $J_1$ are indices $i\in [m_1]$ which define  pairwise $\epsilon$-similar slices and  thus obeys 
\begin{equation}\label{eq:d>k(1-e)}
	d_i \ge l (1- \epsilon/2)
\end{equation}
where $l = |J_1|$. Note that in the previous equation $l$ is unknown
(does not define a final criterion to 
select the indices of the cluster). However, it
already shows that $d_i$ is bounded from below and non negligible as soon as $l$ is large enough. The algorithm aims at computing $l$, in such way that only  $\epsilon$-similar slices
that satisfy \eqref{eq:d>k(1-e)} define
the cluster $J_1$. 
The following statements demonstrate that this
event happens with overwhelming probability. 

\begin{thm}\label{thm1}	Let $l = |J_1|$, assume that $\sqrt{\epsilon}\le \frac{1}{m_1-l}$. $\forall i,n\in J_1 $, for $\la = \Omega(\mu)$, there is a constant $c_1>0$ such that
	\begin{equation}
		\label{didn}
		|d_i - d_{n}| \le l\frac{\epsilon}{2} + \sqrt{\log(m_1-l)}
	\end{equation}
	holds with probability at least $1-e(m_1-l)^{-c_1}$.
\end{thm}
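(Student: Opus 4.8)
The plan is to split the marginal sum according to whether the running index lies in the cluster or not. For $i,n\in J_1$ write
\[
d_i - d_n = \sum_{j\in J_1}\big(c_{ij}-c_{nj}\big) + \sum_{j\notin J_1}\big(c_{ij}-c_{nj}\big),
\]
so that the first term carries the deterministic ``signal'' content and the second the random ``noise'' content. I expect the first sum to produce the $l\,\epsilon/2$ term of \eqref{didn} and the second to produce the $\sqrt{\log(m_1-l)}$ term.

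For the cluster part, both $i$ and $n$ belong to $J_1$, hence for every $j\in J_1$ the pairs $(i,j)$ and $(n,j)$ are $\epsilon$-similar. By the definition of $\epsilon$-similarity together with \eqref{scalar_product} we have $1-\epsilon/2\le c_{ij}\le 1$ and $1-\epsilon/2\le c_{nj}\le 1$, so each difference satisfies $|c_{ij}-c_{nj}|\le \epsilon/2$. Summing over the $l=|J_1|$ indices gives $\big|\sum_{j\in J_1}(c_{ij}-c_{nj})\big|\le l\,\epsilon/2$ deterministically. This step is routine and requires no probabilistic input.

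For the noise part the key point is that, for $j\notin J_1$, the slice covariance is the white Wishart matrix $C_j=Z_j^tZ_j\sim\cW_{m_3}(m_2,I)$, whose top eigenvector $\tbv_j$ is, by the rotational invariance of the Gaussian ensemble, uniformly distributed on the unit sphere and independent of $\tbv_i,\tbv_n$. I would bound each noise coefficient through $c_{ij}=\tla_i\tla_j|\langle\tbv_i,\tbv_j\rangle|\le \tla_j\,|\langle\tbv_i,\tbv_j\rangle|$ and control its two random factors separately: (i) the normalised eigenvalue $\tla_j=\underline\la_j/\la$ is handled by the Tracy--Widom tail \eqref{convergenceDistribution}, which (using $\la=\Omega(\mu)$) keeps $\underline\la_j$ within $\cO(\sigma)$ of $\mu$ and hence keeps $\tla_j$ bounded; (ii) the overlap $|\langle\tbv_i,\tbv_j\rangle|$ is handled by concentration of measure on $S^{m_3-1}$, giving a subgaussian tail $\pr(|\langle\tbv_i,\tbv_j\rangle|\ge t)\le 2e^{-c\,m_3 t^2}$. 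A union bound over the $m_1-l$ noise indices then converts these per-index tails into a uniform control of $\max_{j\notin J_1}|c_{ij}-c_{nj}|$ at the scale $\sqrt{\log(m_1-l)}$, valid on an event of probability at least $1-e(m_1-l)^{-c_1}$; the hypothesis $\sqrt{\epsilon}\le 1/(m_1-l)$ is what renders the residual cross-terms negligible against this scale. Collecting the two contributions yields \eqref{didn}.

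The main obstacle is the probabilistic control of the noise part, and specifically the passage from a bound on a single overlap to a bound on the sum over all $m_1-l$ noise indices at the sharp scale $\sqrt{\log(m_1-l)}$. This requires both the delocalisation/rotational invariance of the Wishart top eigenvectors and the Tracy--Widom concentration of their eigenvalues, combined so that the union bound costs only a logarithm while preserving the polynomial failure probability $(m_1-l)^{-c_1}$. Showing that the naive term-by-term summation, which a priori scales like $(m_1-l)$ times the per-term size, can be upgraded to the stated $\sqrt{\log(m_1-l)}$ using the independence across $j$ and the assumptions on $\epsilon$ and $\la$ is where the real work lies.
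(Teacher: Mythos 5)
Your decomposition of $d_i-d_n$ into a cluster part and a noise part, and your deterministic bound $\big|\sum_{j\in J_1}(c_{ij}-c_{nj})\big|\le l\epsilon/2$, coincide with the paper's proof. The gap is in the noise part, and the route you sketch cannot be completed. You propose to control each coefficient individually --- $\tla_j$ by a Tracy--Widom tail and $|\langle\tbv_i,\tbv_j\rangle|$ by delocalisation of the Wishart top eigenvector --- and then pass to the sum by a union bound, hoping that ``independence across $j$'' upgrades the naive $(m_1-l)\times(\text{per-term size})$ to $\sqrt{\log(m_1-l)}$. It cannot: the summands $c_{ij}=\tla_i\tla_j|\langle\tbv_i,\tbv_j\rangle|$ are nonnegative, so there is no cancellation for independence to exploit, and concentration keeps a sum of nonnegative terms near its expectation rather than below it. Under the theorem's hypothesis $\la=\Omega(\mu)$ (note: not $\Omega(\mu m_1)$, which is the stronger assumption reserved for theorem \ref{thm2}) the factors $\tla_j$ may be of order $1$, the typical overlap is of order $m_3^{-1/2}$, and the expected noise sum is of order $(m_1-l)/\sqrt{m_3}$, which exceeds $\sqrt{\log(m_1-l)}$ unless one imposes a relation between $m_1$ and $m_3$ that is not in the statement. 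Relatedly, your reading of the hypothesis $\sqrt{\epsilon}\le 1/(m_1-l)$ as merely ``rendering residual cross-terms negligible'' misses its actual role: it is the central mechanism of the proof.

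The missing idea is to exploit the $\epsilon$-similarity of $i$ and $n$ \emph{to each other} before invoking any randomness of the noise slices. Since $c_{in}\ge 1-\epsilon/2$ and $\tla_i,\tla_n\le 1$, one gets $\|\tla_n\tbv_n-\tla_i\tbv_i\|^2\le \epsilon$, and then, for each $j\notin J_1$, Cauchy--Schwarz gives
\begin{equation*}
|c_{nj}-c_{ij}| \;=\; \big|\,|\langle \tla_n\tbv_n,\tbv_j\rangle| - |\langle \tla_i\tbv_i,\tbv_j\rangle|\,\big|\,\tla_j \;\le\; \|\tla_n\tbv_n-\tla_i\tbv_i\|\,\tla_j \;\le\; \sqrt{\epsilon}\,\tla_j .
\end{equation*}
The hypothesis $\sqrt{\epsilon}\le 1/(m_1-l)$ now makes the total weight $\sum_{j\notin J_1}\sqrt{\epsilon}\le 1$, so the noise part becomes a weighted sum $\frac{1}{\la}\sum_{j\notin J_1} a_j\underline\la_j$ of top eigenvalues of independent white Wishart matrices with weights $a_j\ge 0$ summing to at most one. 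This is exactly the setting of the first part of lemma \ref{lem1}, which (via the gamma approximation to the Tracy--Widom law and a Hoeffding-type inequality for sub-gaussian variables) bounds it by $\sqrt{\log(m_1-l)}$ with probability at least $1-e(m_1-l)^{-c_1}$, precisely under $\la=\Omega(\mu)$. No rotational invariance, eigenvector delocalisation, or union bound over the noise indices is needed anywhere; the difference structure and the weight budget do all the work that you correctly identified as outstanding but left unresolved.
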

\begin{proof} See supplementary material, section \ref{app:proofthm1}. 
\end{proof}
We inspect closer the previous inequality, and use $\sqrt{\epsilon}\le 1/(m_1-l)$
to write 
$|d_i - d_{n}| \le  l\frac{\epsilon}{2} + 
\sqrt{ \frac{1}{2} |\log\epsilon|  }$. 
Thus,  as the logarithm may indeed increase,
to choose a smaller $\epsilon$ might not lead to a smaller radius of the cluster. There is therefore an unknown threshold for $\epsilon$ that will lead to
the best clustering. Another remark is the following: 
since we must determine $l$ by iteration and that value 
turns out to decrease, the relation \eqref{didn} holds with a higher probability at each step of the iteration. 

\begin{thm}\label{thm2}
	For $i\in\bar{J}_1$, if $\la = \Omega(\mu m_1)$,
	\begin{equation}
		d_i\le \frac{l}{m_1} + \sqrt{\log(m_1-l)}
	\end{equation} 
	with probability at least $1-e(m_1-l)^{-c_1}$ with $c_1>0$. 
\end{thm}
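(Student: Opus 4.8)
The plan is to bound $d_i$ for a noise index $i\in\bar{J}_1$ by splitting the marginal sum \eqref{26} according to whether the partner slice carries signal or noise,
\begin{equation}
d_i = \sum_{j\in J_1} c_{ij} + \sum_{j\in\bar{J}_1} c_{ij},
\end{equation}
and to show that the first (signal) sum produces the term $l/m_1$ while the second (noise) sum, together with the fluctuations of the top eigenvalues, is absorbed into $\sqrt{\log(m_1-l)}$. The argument runs parallel to that of Theorem \ref{thm1}, the crucial new point being that for $i\in\bar{J}_1$ the rescaled eigenvalue $\tla_i$ is now small rather than of order one.

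First I would control $\tla_i=\underline\la_i/\la$ for $i\in\bar{J}_1$. Here $C_i=Z_i^tZ_i$ is white Wishart, so $\underline\la_i=\|Z_i\|^2$ and, by a Gaussian concentration bound for the largest singular value of $Z_i$, one has $\underline\la_i\le(\sqrt{m_2}+\sqrt{m_3}+t)^2$ with probability at least $1-e^{-t^2/2}$. A union bound over the $m_1-l$ noise slices with the choice $t=\Theta(\sqrt{\log(m_1-l)})$ yields, with probability at least $1-e(m_1-l)^{-c_1}$, the uniform estimate $\underline\la_i\le\mu+\cO(\sqrt{\mu\log(m_1-l)})$ for every $i\in\bar{J}_1$. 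Invoking now the hypothesis $\la=\Omega(\mu m_1)$ gives the key bound $\tla_i=\cO(1/m_1)$; this is exactly the point where the stronger assumption $\la=\Omega(\mu m_1)$ (as opposed to $\la=\Omega(\mu)$ in Theorem \ref{thm1}) is needed.

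With this in hand the two sums are estimated as follows. For the signal part, each $c_{ij}=\tla_i\tla_j|\langle\tbv_i,\tbv_j\rangle|\le\tla_i$ by \eqref{scalar_product} (since $\tla_j\le 1$ and $|\langle\tbv_i,\tbv_j\rangle|\le 1$), so $\sum_{j\in J_1}c_{ij}\le l\,\tla_i\le l/m_1$ after fixing the $\Omega$-constant. For the noise part, both $\tla_i$ and $\tla_j$ are $\cO(1/m_1)$ and the eigenvectors $\tbv_i,\tbv_j$ of independent noise slices are nearly orthogonal, so $\sum_{j\in\bar{J}_1}c_{ij}$ is of smaller order and is comfortably dominated by the fluctuation term $\sqrt{\log(m_1-l)}$ already produced by the eigenvalue bound. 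Summing the two contributions and carrying the union-bound probability through gives the claimed inequality on an event of probability at least $1-e(m_1-l)^{-c_1}$.

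The main obstacle is the uniform eigenvalue control in the second step: the Tracy--Widom statement \eqref{convergenceDistribution} is only a convergence in distribution and does not by itself supply a tail estimate valid simultaneously over all $m_1-l$ noise slices. To make the union bound rigorous one must replace it by a genuine non-asymptotic concentration inequality for the largest eigenvalue of a white Wishart matrix (equivalently, for $\|Z_i\|$), and it is the interplay of this tail with the scaling $\la=\Omega(\mu m_1)$ that simultaneously produces the probability $1-e(m_1-l)^{-c_1}$ and the additive $\sqrt{\log(m_1-l)}$ term.
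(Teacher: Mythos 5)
Your proof is correct, and it reaches the bound by a genuinely different probabilistic route than the paper. You share the paper's skeleton --- split $d_i$ over $j\in J_1$ and $j\in\bar{J}_1$, and use $\la=\Omega(\mu m_1)$ to force $\tla_i \le \cO(1/m_1)$ --- but the paper controls the noise eigenvalues through the Tracy--Widom machinery: for the signal sum it bounds $\la_i\le\sigma(2k\theta+x_0)+\mu$ deterministically, using the compact support $[x_0,\,2k\theta+x_0]$ of the \emph{gamma approximation} to $F_1$ from \cite{ietVlokOlivier2012}, and for the noise sum it invokes lemma \ref{lem1} (a Hoeffding-type inequality applied to the centered variables $(\la_r-\mu)/\sigma$, with coefficients $a_r=1/m_1$ summing to at most $1$), which is where both the $\sqrt{\log(m_1-l)}$ term and the probability $1-e(m_1-l)^{-c_1}$ originate. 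You instead replace all of this with a non-asymptotic concentration bound for the largest singular value of a Gaussian matrix (Davidson--Szarek type, $\pr(\|Z_i\|>\sqrt{m_2}+\sqrt{m_3}+t)\le e^{-t^2/2}$) plus a union bound over the $m_1-l$ noise slices, obtaining the uniform estimate $\tla_j=\cO(1/m_1)$ for every noise index simultaneously. This buys two things. First, your argument is genuinely non-asymptotic: as you correctly flag, the paper's route leans on convergence in distribution and on a bounded-support property that the true TW law does not have (it is an artifact of the gamma fit), so your version is arguably tighter in rigor than the original. Second, once $\tla_i\tla_j\le C^2/m_1^2$ holds uniformly, the noise sum is bounded by $(m_1-l)\,C^2/m_1^2=\cO(1/m_1)$ using only $|\langle\tbv_i,\tbv_j\rangle|\le 1$ --- strictly smaller than the paper's $\sqrt{\log(m_1-l)}$ --- so your appeal to near-orthogonality of the noise eigenvectors, which was the one unproven step in your sketch, is simply unnecessary and should be dropped. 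What the paper's route buys in exchange is shared infrastructure: lemma \ref{lem1} is the same engine used for theorem \ref{thm1}, so both theorems rest on one tail estimate. Two cosmetic repairs to your write-up: the expansion of $(\sqrt{m_2}+\sqrt{m_3}+t)^2$ also produces a $t^2=\cO(\log(m_1-l))$ term, harmless since $\log(m_1-l)\le\mu$ in the paper's regime $m_2,m_3\ge 10$; and absorbing the $\Omega$-constants into $l/m_1$ is the same licence the paper itself takes.
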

\begin{proof} 
	See supplementary material, section \ref{app:proofthm1}. 
\end{proof}
The distance ${\rm dist}(A, B)$ between two sets of real numbers $A$ and $B$ is naturally defined as 
$ \min\{|x-y| : x\in A, y\in B\}$. 
We introduce 
$d(J_1)=\{d_i: i \in J_1\}$, 
and $d(\bar{J}_1)=\{d_i: i \in \bar{J}_1\}$. 

\begin{cor}\label{corollary}
	Let $J_1\subset [m_1]$ the set of all indices of the cluster in the first dimension. Then
	\begin{equation}
		{\rm dist}(d(J_1), d(\bar{J}_1))\ge l(1-\frac{\epsilon}{2} - \frac{1}{m_1}) - \sqrt{\log(m_1 - l)}  , \end{equation}
	with probability at least $1-e(m_1-l)^{-c_1}$, with $c_1>0$. 
\end{cor}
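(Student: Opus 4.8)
The plan is to reduce the set-distance to a gap between two scalar quantities — the smallest marginal sum inside the cluster and the largest one outside — and then bound each separately using the two preceding theorems. By definition, ${\rm dist}(d(J_1), d(\bar{J}_1)) = \min\{|d_i - d_j| : i \in J_1,\ j \in \bar{J}_1\}$. Under the hypotheses inherited from Theorems \ref{thm1} and \ref{thm2}, every marginal sum attached to a cluster index should dominate every marginal sum attached to a non-cluster index, so the absolute value can be dropped and the minimum is attained at the closest pair, giving
\begin{equation}
	{\rm dist}(d(J_1), d(\bar{J}_1)) = \min_{i\in J_1} d_i - \max_{j\in\bar{J}_1} d_j.
\end{equation}
First I would therefore split the task into lower-bounding $\min_{i\in J_1} d_i$ and upper-bounding $\max_{j\in\bar{J}_1} d_j$.

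For the lower bound I would invoke \eqref{eq:d>k(1-e)} directly: since every $i \in J_1$ defines pairwise $\epsilon$-similar slices, one has $d_i \ge l(1-\epsilon/2)$ for each such $i$, hence $\min_{i\in J_1} d_i \ge l(1-\epsilon/2)$. This is a deterministic consequence of the $\epsilon$-similarity definition and carries no probabilistic cost. For the upper bound I would apply Theorem \ref{thm2}: under $\la = \Omega(\mu m_1)$, each $d_j$ with $j \in \bar{J}_1$ obeys $d_j \le l/m_1 + \sqrt{\log(m_1-l)}$ on the event that the white Wishart top-eigenvalue fluctuations remain within their Tracy--Widom tail, an event of probability at least $1-e(m_1-l)^{-c_1}$. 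Subtracting the two and collecting the $l$-terms then yields
\begin{equation}
	{\rm dist}(d(J_1), d(\bar{J}_1)) \ge l\big(1-\tfrac{\epsilon}{2}\big) - \tfrac{l}{m_1} - \sqrt{\log(m_1-l)} = l\big(1-\tfrac{\epsilon}{2}-\tfrac{1}{m_1}\big) - \sqrt{\log(m_1-l)},
\end{equation}
which is precisely the claimed inequality, with the probability statement inherited verbatim from Theorem \ref{thm2}.

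The hard part will be the probabilistic bookkeeping on the non-cluster side. Theorem \ref{thm2} is phrased for a single index $j\in\bar{J}_1$, whereas the reduction above needs the bound on $\max_{j\in\bar{J}_1} d_j$, i.e.\ simultaneously over all $m_1-l$ exterior indices; a naive union bound would degrade the exponent $c_1$. The cleanest route is to read Theorem \ref{thm2} as already controlling the maximal exterior marginal sum — which is the natural output of a Tracy--Widom extreme-value estimate on the largest eigenvalue across the i.i.d.\ noise slices — so that the union over $\bar{J}_1$ is absorbed into $c_1$ and the probability $1-e(m_1-l)^{-c_1}$ is preserved. A secondary point I would verify is that the right-hand side is genuinely positive in the stated regime, so that the ordering $d_i > d_j$ used to remove the absolute value is valid and the bound is informative; this holds as soon as the separation term $l(1-\epsilon/2-1/m_1)$ outgrows the logarithmic noise term $\sqrt{\log(m_1-l)}$, consistent with the remark following Theorem \ref{thm1}.
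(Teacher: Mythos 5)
Your proof is correct and takes essentially the same route as the paper: the paper's own proof is the one-line remark that the corollary follows ``starting from \eqref{eq:d>k(1-e)} and using theorem \ref{thm2}'', which is exactly your combination of the deterministic lower bound $\min_{i\in J_1} d_i \ge l(1-\epsilon/2)$ with the probabilistic upper bound on $d_j$, $j\in\bar{J}_1$, followed by subtraction. Your two extra checks are added care rather than a different approach --- and note that the absolute-value issue is moot anyway, since $|d_i - d_j| \ge d_i - d_j \ge \min_{i'\in J_1} d_{i'} - \max_{j'\in\bar{J}_1} d_{j'}$ holds for every pair regardless of ordering, so the lower bound on the set distance needs no positivity assumption (while the union-bound cost over $\bar{J}_1$ is indeed silently absorbed into $c_1$ by the paper, exactly as you suggest).
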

\begin{proof} This is an easy calculation starting from \eqref{eq:d>k(1-e)} and 
	using theorem \ref{thm2}. \end{proof}

The corollary \ref{corollary} proves that the distance ${\rm dist}(d(J_1), d(\bar{J}_1))$ increases when the number of elements that belong to the cluster increases. 

\begin{figure}[t]
	\centering
	\includegraphics[width=5cm, height=3.5cm ]{./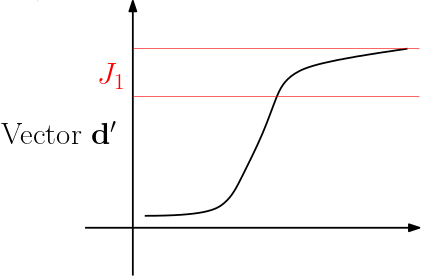}
	\caption{Representation of the vector $\mathbf{d}$ after reordering its elements in ascending order.}
	\label{fig:vectord1}
\end{figure}

\begin{algorithm}
	\caption{Multi-slice clustering for 3-order tensor}
	\begin{algorithmic}[1]
		\label{algo:multislice}
		\renewcommand{\algorithmicrequire}{\textbf{Input:}}
		\renewcommand{\algorithmicensure}{\textbf{Output:}}
		\REQUIRE $3$-order tensor $\cT\in\R^{m_1\times m_2\times m_3}$, real parameter $\epsilon>0$
		\ENSURE  sets $J_1, J_2$ and $J_3$
		\FOR {$j$ in $\{1,2,3\}$}
		\STATE Initialize the matrix $M$
		\STATE 	Initialize $ \la_0\gets 0$
		\FOR{ i in $\{1,2,\cdots,m_j\}$}
		\STATE Compute : $C_i\gets T_i^t T_i $
		\STATE Compute the top eigenvalue and eigenvector $(\la_i, \bv_i)$ of $C_i$
		\STATE Compute $M(:,i) \gets \la_i * \bv_i$
		\IF {$\la_i >\la_0$}
		\STATE $\la_0 \gets \la_i$
		\ENDIF
		\ENDFOR
		\STATE Compute : $V \gets M / \lambda_0$
		\STATE Compute : $C \gets |V^t V|$
		\STATE Compute : $\mathbf{d}$ the vector marginal sum of $C$ and sort it
		\STATE Initialization of $J_j$ using the maximum gap in $\mathbf{d}$ sorted
		\STATE Compute : $l\gets |J_j|$
		\WHILE {not convergence of the elements of $J_j$ (theorem \ref{thm1})}
		\STATE Update the element of $J_j$ (excluding $i$ s.t. $d_i$ is the smallest value that violates theorem \ref{thm1})
		\STATE 	Compute $l$
		\ENDWHILE
		\ENDFOR
	\end{algorithmic} 
\end{algorithm}

Consider, $\forall i \in [m_3], d_i'\in \{d_1,\cdots,d_{m_3}\}$, the sequence of $d_i$ ordered in increasing values, $d_1'\le d_2'\le\cdots\le d_{m_3}'$ and  $\mathbf{d'} = (d_1', \cdots, d_{m_3}')$.
The figure \ref{fig:vectord1} represents the global view of the vector $\mathbf{d}'$ structure. $J_1$ defines
the highest indices of
the vector $\mathbf{d}'$.
Figure \ref{fig:vectord1} exhibits  
the fact corollary \ref{corollary} 
selects the indices of $J_1$ from the remaining indices with lower value.
Our algorithm uses the differences between two consecutive values of $d'_i$ to detect the gap whose corollary \ref{corollary} ensures the existence with high probability.

\noindent{\bf Computational complexity --}
A few words on the complexity of our	
algorithm is in order.  
To simplify the evaluation, we
require $m_i \in \Theta(n)$	for $i=1,2,3$. The algorithm \ref{algo:multislice} has a complexity class $\cO(n^3)$ that 
is comparable with the complexity of other triclustering algorithms. 
One may be puzzled about the cost of the last repeat loop depending on a convergence parameter. However, there is no issue with that since $l$ decreases at each turn. The convergence condition evaluates at most as $\cO(l^2)$,
and so this loop costs at most $\cO(l^3) \subset \cO(n^3)$.

\section{Experiments}
\label{experiments}

To evaluate the quality of the clustering yield by the algorithm, we first use two indices suitable for synthetic data, namely, the recovery rate, see \cite{FeiziNIPS2017}, and the Adjusted Random Index (ARI) in \cite{ARI}. Then, we also discuss the quality of the cluster with respect to the data similarity. 

\subsection{Synthetical results}
Here, we evaluate the performance of the MSC algorithm on synthetic datasets. To generate the input tensor $\cT$, we fix the three sets $J_1, J_2$,  and  $J_3$, and the entries of the vectors $\bv, \bu$ and $\bw$ in \eqref{eq:problem} equal to $1/\sqrt{|J_1|}, 1/\sqrt{|J_2|}$ and $1/\sqrt{|J_3|}$, respectively, in the cluster and zero outside the cluster. The noise is chosen standard normal distributed. 
The quality of the cluster result is evaluated by the recovery rate in each dimension and then we perform a mean of these index measures. 
Furthermore, we use a similarity
index (sim) that measures the quality of the correlation between the vectors of the cluster:
\begin{eqnarray}
	{\rm sim} = \frac{1}{3} \sum_{r=1}^{3}\;  \frac{1}{J_r^2}\; \sum_{i,j \in J_r } \; c_{ij} \le 1 
\end{eqnarray}
Thus, the similarity index will help us
to assess the quality of the clustering procedure. 

In our simulations, we consider $m_1 = m_2 = m_3 = 50$ and $|J_1|=|J_2|=|J_3| = 10$. Hence, with a fixed value of $\epsilon$, we variate the value of $\gamma$ to see the coherence of theorem \ref{thm1} with the experiment. In the synthetic data, $ \frac{1}{(m_1-l)^2} \approx 0.00062$.
However, if the chosen $\epsilon$ does not verify the hypothesis of theorem \ref{thm1}, the last selected cluster is returned. 
For given values of $\gamma$ \eqref{eq:problem}, we perform  10 times the computation (re-sampling only the  noise) then we compute the mean of the cluster quality (recovery rate) and the similarity (sim) of the output for the three modes. 
The results are shown in figure \ref{multislice_only} with the standard deviations (std) of each measure at each $\gamma$.
\begin{figure}[h]
	\begin{subfigure}[b]{0.4\textwidth}
		\centering
		\includegraphics[width=5.7cm, height=4.3cm ]{./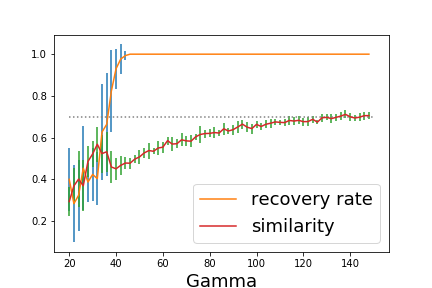}
		\caption{$\epsilon = 0.005$}
		\label{fig:rateari}
	\end{subfigure}
	\hspace{1.1em}
	\begin{subfigure}[b]{0.4\textwidth}
		\centering
		\includegraphics[width=5.7cm, height=4.3cm]{./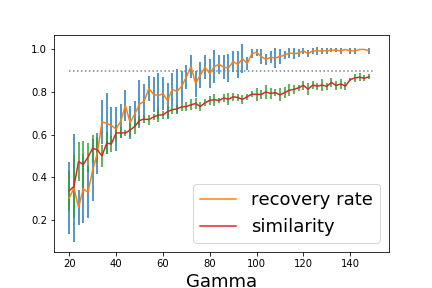}
		\caption{$\epsilon = 0.0006$}
		\label{fig:bari}
	\end{subfigure}
	\hfill
	\caption{Recovery rate of the MSC method with 
		$\gamma \in [20, 150]$: 
		(a) with $\epsilon = 0.005$ and (b) with $\epsilon = 0.0006$.} 
	\label{multislice_only}
\end{figure}

For a range of values of $\gamma$, 
the recovery rate of figure \ref{multislice_only} (a) increases faster than that of the figure (b). For the larger value of $\epsilon$ preventing to enter in the while loop, the algorithm reaches a high recovery rate (almost 1) very fast (with $\gamma = 50$) but note that the similarity within the cluster is low (sim$=0.5$). 
However, for the same value of $\gamma=50$, with a smaller $\epsilon$ that obeys the  hypothesis of theorem \ref{thm1}, 
some indices are removed in the cluster, see figure  \ref{multislice_only} (b), to guarantee the similarity between the selected indices. As a result, we have a smaller recovery rate equals to $0.7$ and sim$=0.65$.  Increasing the signal, 
the recovery rate
and the similarity are remarkably increased to reach high values. Thus, we have successfully tied 
the recovery rate with the similarity index.

\subsection{Comparison with existing methods}
In this section, we perform two comparisons. On one hand, 
we compare the MSC and 
CP+k-means and Tucker+k-means' methods. 
We use the same construction and data as in the previous experiment. Referring to \cite{scalableTamaraKolda}, the CP and Tucker decompositions deliver factor matrices
for each mode and then we apply  
the k-means' method to the resulting factor matrices. In our case, we use k-means with $k=2$. The ARI serves as a comparison 
tool of the quality of the results obtained by the different algorithms. We additionally compute the mean squared error (MSE) of the sub-cube cluster as a similarity test of the cluster data. 
On the other hand, we use the MSC to find a tensor biclustering and 
compare its performance with the TFS method. To do so, the MSC needs some light adjustments: the clustering is done on 2 modes
and is not performed on the third mode that will define the trajectory indices. We generate the same dataset of \cite{FeiziNIPS2017} (section 4.1 therein) with $m_3 = 50, m_1=m_2 = 70$ and $|J_1|=|J_2| = 10$ where $J_1$ and $J_2$ represent the two sets of indices in the first and second modes, respectively, that represent the correlated trajectories. We  use the recovery rate and the correlation mean of trajectories to evaluate the inference quality of the results in both methods. 
\begin{figure}[h]
	\begin{subfigure}[b]{0.4\textwidth}
		\centering
		\includegraphics[width=5.7cm, height=4.3cm ]{./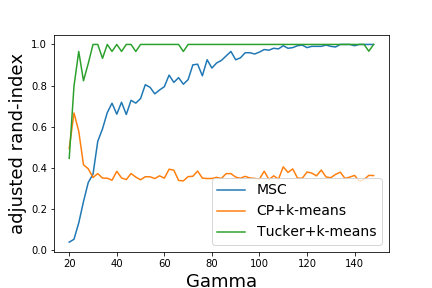}
		\caption{}
		\label{fig:msc_cp_tucker}
	\end{subfigure}
	\hspace{1.2em}
	\begin{subfigure}[b]{0.4\textwidth}
		\centering
		\includegraphics[width=5.7cm, height=4.3cm ]{./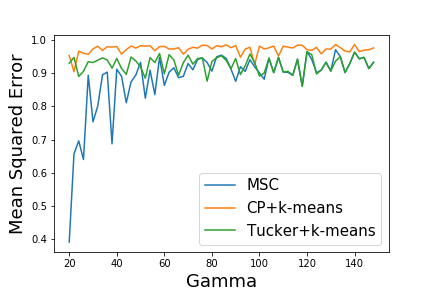}
		\caption{}
		\label{fig:tfs_msc}
	\end{subfigure}
	\caption{(a) ARI of MSC ($\epsilon=0.0006$), CP+k-means and the Tucker+k-means ($k=2$). (b) MSE of the  sub-cube cluster.
	}
	\label{fig:comparison}
\end{figure}

In figure \ref{fig:msc_cp_tucker}, 
we observe that the Tucker+k-means is the fastest method to find the best cluster but note that the ARI is not stable for $\gamma < 50$ (a lot of fluctuations that we omit in the figure). The MSC reaches an ARI $\ge 0.7$ for $\gamma = 50$ and becomes close to $0.9$ for $\gamma \ge 75$ and proves to be much stable. As we expected, the ARI of the MSC grows a bit slower than the Tucker+k-means because it is devised to select only the most similar slices in each mode. But at reasonable
signal strength, it becomes comparable 
with the Tucker+k-means. 
In any case, the CP+k-means is 
outperformed because it remains stationary even if the value of $\gamma$ increases.
Figure \ref{fig:tfs_msc} shows the MSE of the different methods.
As expected, the MSC has the best MSE for $\gamma < 75$,
whereas it fits with that of the Tucker+k-means for $\gamma \ge 80$. 
We can conclude that the MSC is certainly a significant clustering algorithm.  

\begin{figure}[h]
	\begin{subfigure}[b]{0.4\textwidth}
		\centering
		\includegraphics[width=5.7cm, height=4.3cm ]{./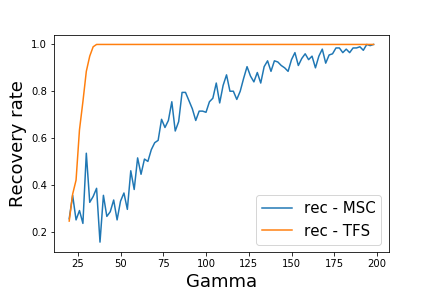}
		\caption{}
		\label{fig:msc_tfs_1}
	\end{subfigure}
	\hspace{1.2em}
	\begin{subfigure}[b]{0.4\textwidth}
		\centering
		\includegraphics[width=5.7cm, height=4.3cm ]{./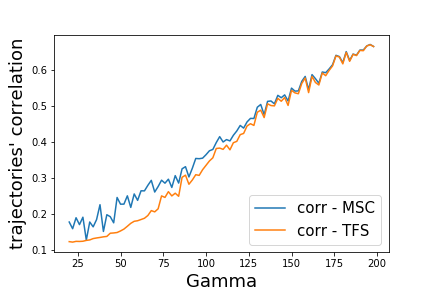}
		\caption{}
		\label{fig:tfs_msc_mse}
	\end{subfigure}
	\caption{Recovery rates (rec) and correlation mean (corr) for MSC and TFS.}
	\label{fig:comparison2}
\end{figure}

In the figure \ref{fig:comparison2}, we observe that the TFS reaches the highest recovery rate faster than the MSC method with $\epsilon = 0.00027$ 
(but remember that the TFS method takes the cardinality 
$|J_1|\times |J_2|$ as an input parameter).
The recovery rate of the MSC 
grows slowly but reaches a remarkable 
good rate at $\gamma=125$. 
We recall that the TFS method aims at selecting highly correlated trajectories. Nevertheless, 
as shown in figure \ref{fig:comparison2},
for $\gamma <120$, the trajectories which are selected 
have a very weak correlation mean and thus are not highly correlated. In  contrast, as we planed, 
the correlation mean and recovery rate
of the MSC increase together. 
This comparison shows that, combining both performances, the MSC  is certainly a more relevant method 
for clustering similar trajectories  in a $3-$order tensor compared to  the TFS.

\subsection{Real dataset}

We undertake now the study of the flow injection analysis (FIA) dataset \cite{fiadata} with size $12$ (samples) $\times 100$ (wavelengths) $\times 89$ (times). 
The MSC algorithm is tested with different values of $\epsilon\in [0.005, 00001]$. The cardinality of the cluster decreases and stabilizes for $\epsilon\in[0.00013,0.00001]$. Therefore, we choose $\epsilon = 0.00013$.
Table \ref{tab:tab1} presents the indices of similar slices in each mode and the maximum of the Frobenius norm of the difference between them ($\max_{i,j}\|slice(i) - slice(j)\|_F$ for $i,j$ in the cluster mode).
\begin{center}
	\begin{table}[h]
		\begin{tabular}{ | m{2.2cm} | m{6cm}| m{3.5cm} | } 
			\hline
			\textbf{mode} & \textbf{Indices of similar slices} & \textbf{Max Frobenius norm} \\ 
			\hline
			mode-$1$  & 10, 11 & $3.08616$\\ 
			\hline
			mode-$2$  &  39, 40, 41, 42, 43 &$ 0.40307$  \\ 
			\hline
			mode-$3$  & 45, 46, 47, 48, 49, 50, 51, 52 & $ 1.05795$ \\
			\hline
		\end{tabular}
		\vspace{0.5cm}
		\caption{The similar slices in each mode with the maximum Frobenius norm of the difference between two slices within each mode.}
		\label{tab:tab1}
	\end{table}
\end{center}
We will show that this defines a good 3D cluster with highly correlated vectors by adding a new element in each cluster mode and then experimentally demonstrate that the correlation in the cluster decreases while the MSE of the cluster increases. 

Table \ref{tab:tab3} exhibits the similarity between the fibers (trajectories) by letting one mode free and computing the correlation mean of all fibers from the two clusters of the remaining modes. 
\begin{table}[h]
	\begin{tabular}{ | m{3cm} | m{5.5cm}| } 
		\hline
		\textbf{mode} & \textbf{Fibers' correlation}  \\ 
		\hline
		mode-$1$  & 0.99915 \\ 
		\hline
		mode-$2$  & 0.97164 \\ 
		\hline
		mode-$3$  & 0.99059 \\ 
		\hline
	\end{tabular}
	\vspace{0.5cm}
	\caption{Fibers' correlation in each mode and the MSE of the triclustering.}
	\label{tab:tab3}
\end{table}
Moreover, we compute also the MSE of the triclustering (sub-cube) and obtain MSE = 0.48360.

Now, we evaluate the quality of cluster presented in table \ref{tab:tab1} and \ref{tab:tab3}. To do so, for each mode $i=1,2,3$, we add one index, randomly selected, to the cluster set $i$, 
and perform the measurements. 
This is repeated  60 times (this is the experiment E-$i$). At each time, we compute the maximum Frobenius norm of 
the slice differences, the correlation of the fibers and the MSE of the triclustering. At the end, we select the minimum for all calculations of the Frobenius norms and of the MSE, and the mean of the fibers' correlation. The results are reported in table \ref{tab:tab4}.

\begin{table}[h]
	\begin{tabular}{ | m{2cm} | m{3cm}| m{3cm}| m{3.5cm}| } 
		\hline
		\textbf{E-mod} & \textbf{Frobenius norm} & \textbf{Fibers' correlation}& \textbf{MSE of triclustering}  \\ 
		\hline
		E-1  & 3.17592 & 0.83039 &0.49257 \\ 
		\hline
		E-2  & 3.83777 & 0.86484 & 0.51133 \\ 
		\hline
		E-3  & 4.86486 & 0.84928 & 1.19725 \\ 
		\hline
	\end{tabular}
	\vspace{0.5cm}
	\caption{Results of the experiments E-1, E-2 and E-3.}
	\label{tab:tab4}
\end{table}
We observe, comparing with the above tables and
previous MSE of 0.48360,
that the Frobenius norm and MSE increase 
and the fibers' correlation decreases. 
In short, adding a new element in the output cluster decreases its quality. 
Hence, the MSC has remarkable performance
for detecting a 3D cluster, 
with strongly similar slices and 
and strongly correlated fibers in this real dataset.

\section{Conclusion and future work}
\label{ccl}

The majority of the clustering algorithms depends on parameters
which correspond to a given number of clusters or a given number of elements assigned to a cluster. Such parameters do not guarantee the most accurate clusters. 
We have introduced a new clustering method, the MSC, for $3$-order tensor dataset that tackles this issue by 
using a threshold parameter $\epsilon$ to carry out the cluster selection and refinement. 
Our main results state in 
probabilistic terms (theorems \ref{thm1})
that show the algorithm  effectiveness. The experimental validation demonstrates that our algorithm performs well. We have comparable performance (ARI) with the Tucker+k-means, 
and, adapted to biclustering, with TFS. Moreover, our algorithm guarantees strong correlation/similarity in the triclustering,
even at small signal strength, a property that is largely not verified for many  algorithms in particular TFS and Tucker+k-means. 
Concerning real data sets, the new algorithm has located a 3D cluster with highly similar slices 
and fibers. We note that MSC can be used also to detect the outlier slices in the dataset. 
A possible fruitful avenue of research would be to select a set of most dominant eigenvectors in each slice to  define a multi-clustering. 
Such a multi-clustering could be compared with 
contemporary approaches such as the multi-way clustering
for higher dimensional data. 
This deserves full-fledged investigation.

\bibliographystyle{splncs04}

\bibliography{msc_paper}

\newpage

\title{Multi-Slice Clustering for $3$-order Tensor Data - Supplementary Material}

\author{Dina Faneva Andriantsiory, Joseph Ben Geloun, Mustapha Lebbah}

\institute{Laboratoire d'Informatique de Paris Nord (LIPN) \\   Université Sorbonne Paris Nord}

\maketitle

\begin{abstract}
	This document provides a 
	supplementary material
	to the manuscript 
	``Multi-Slice Clustering for $3$-order Tensor Data". 
	It gathers the proof the its main statements. 
	In particular, we prove that within a 3-cluster selected by the Multi-Slice Clustering method, the similarity of data holds with high probability.  
	
\end{abstract}

\section{Notation}

We use $\cT$ to represent the 3-order tensor data, see figure \ref{fig:tensor}.  We also use the Matlab notation in the next manipulations. For matrices, we use  capital letters: $M, C,\cdots$, and $\|M\|$ is  the operator norm of $M$. The lowercase boldface $\bx, \bv, \cdots$ represent vectors and $\|\bx\|_2$ is the euclidean norm of the vector $\bx$. For any set $J$, $|J|$ denotes the cardinality of $J$ and $\bar{J}$ denotes the complementary of $J$ in a larger set. For an integer $n>0$, we denote $[n] = \{1,\cdots,n\}$.  The asymptotic notation $a(n) = \cO(b(n))$ (res. $a(n) = \Omega(b(n))$) means that, there exists a universal constant $c$ such that for sufficiently large $n$, we have $|a(n)| \le c b(n)$ (resp. $|a(n)| \ge  c b(n)$). 

\begin{figure}[h!]
	\centering
	\begin{subfigure}[b]{0.4\textwidth}
		\centering
		\includegraphics[width=\textwidth ]{./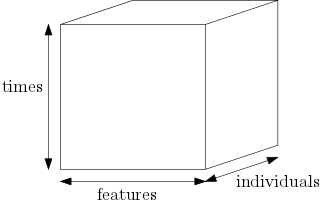}
		\caption{A 3-order tensor}
		\label{fig:tensor}
	\end{subfigure}
	\hfill
	\begin{subfigure}[b]{0.4\textwidth}
		\centering
		\includegraphics[width=\textwidth]{./image/tensor_biclustering.png}
		\caption{Tensor biclustering}
		\label{fig:tfs}
	\end{subfigure}
	\hfill
	\newline
	\begin{subfigure}[b]{0.4\textwidth}
		\centering
		\includegraphics[width=\textwidth]{./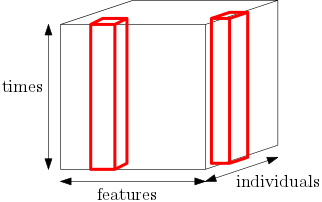}
		\caption{tensor biclustering extension}
		\label{fig:tfs_extension}
	\end{subfigure}
	\hfill
	\begin{subfigure}[b]{0.4\textwidth}
		\centering
		\includegraphics[width=\textwidth]{./image/objective.png}
		\caption{Triclustering (Our Model)}
		\label{fig:triclustering}
	\end{subfigure}
	\caption{(a) Representation of a 
		3-order tensor data set. (b) The tensor biclustering problem. (c) Multiple tensor biclustering \cite{FeiziNIPS2017}. (d) The triclustering problem.}
	\label{fig:cluster}
\end{figure}

\section{Problem formulation}\label{pbmformulation}
Let $\cT = \cX + \cZ$ with $\cT\in\R^{m_1\times m_2\times m_3}$, where $\cX$ is the signal tensor and $\cZ$ is the noise tensor. Consider

\begin{equation}
	\cT = \cX + \cZ = \sum_{i = 1}^{r}\gamma_i \bw^{(i)}\otimes \bu^{(i)}\otimes\bv^{(i)} + \cZ \label{eq:problem}
\end{equation}
where $\forall i, \gamma_i>0$ stands for the signal strength,  $\bw^{(i)}\in \R^{m_1}, \bu^{(i)}\in\R^{m_2}$ and $\bv^{(i)}\in\R^{m_3}$ and $r$ is a number of rank-1 tensor (vector) decomposition of the signal tensor. We shall restrict to the case $r = 1$.
\begin{definition}[Triclustering]
	The problem of tensor triclustering aims  at computing a triple $(J_1, J_2, J_3)$ of index sets  with $J_1\subset[m_1], J_2\subset[m_2]$ and $J_3\subset [m_3]$, where the entries of $\cT$ are similar.
\end{definition}
To simplify the notation, we drop the superscripts $(i)$ from $\bv^{(i)}$ and from the other vectors.  For every $(j_1,j_2,j_3)\in J_1\times J_2\times J_3, \bw(j_1)=\delta, \bu(j_2)=\delta$ and $\bv(j_3)=\delta$ for a constant $\delta >0$ and $\cX_{j_1,j_2,j_3}=0$ for each $(j_1,j_2,j_3)$ outside $J_1\times J_2\times J_3$. 
Concerning the noise mode, for $(j_1,j_2,j_3) \in \bar{J}_1\times \bar{J}_2\times\bar{J}_3 $, we assume that the entries of $\cZ$ are i.i.d and have a standard normal distribution. 
This is the conventional noise model often considered in unsupervised learning method for tensor data \cite{FeiziNIPS2017,Cai_2017}.

\section{Method}
\label{method}

\subsection{Multi-slice clustering}
Consider the triclustering problem \eqref{eq:problem}. 
In the notation of Kolda and Bader \cite{Kolda2009}, let 
\begin{equation}
	\label{tmatrix}
	T_i = \cT(i,:,:)=X_i + Z_i,\qquad i\in[m_1]
\end{equation}
be the horizontal (the first mode) matrix slice of the tensor $\cT$, with $X_i = \cX(i,:,:)$ and $Z_i = \cZ(i,:,:)$. Similarly, we can define the lateral (the second mode)  and the frontal (the third mode) matrix slices. A way to learn the embedded triclustering in a real tensor data set is to find the similarity of the slices in each mode. This yields an index subset  $J_i$ for each mode. Now, the collection of $J_i$ determines the tricluster.
\begin{definition}[Multi-slice clustering] 
	The problem of multi-slice clustering of a $3$-order tensor $\cT$ aims at determining and gathering the indices of the matrix slices that are similar in each dimension of the tensor. 
\end{definition}

To illustrate the procedure, we will focus on the horizontal slices. The treatment of the other slices follows the same idea. 
We start by computing and performing a spectral analysis of
\begin{equation}
	C_i = T_i^tT_i \label{covariance}
\end{equation}
for a fixed $i\in [m_1]$.
For each $i\in[m_1]$, $C_i$ represents the symmetric covariance matrix of the $i$-th horizontal slice. 

For $i\in J_1$, equation \eqref{eq:problem} shows that 
\begin{equation}
	\label{cXX}
	C_i = (X_i + Z_i )^t(X_i + Z_i ) = X_i^tX_i + X_i^tZ_i + Z_i^tX_i + Z_i^tZ_i 
\end{equation}
Taking into consideration \eqref{eq:problem}
and \eqref{tmatrix}, a tensor calculation 
shows that the matrix 
$ X_i^tX_i  = \la_i \bv_i\bv_i^t$, 
where $\la_i = \gamma_i^2 (\bw(i))^2>0$ is the top eigenvalue
of  $ X_i^tX_i$ that corresponds to 
the top eigenvector $\bv_i$. We re-express 
\begin{equation}
	C_i = \la_i \bv_i\bv_i^t + W_i  \label{covvector} 
\end{equation}
with $W_i$ the remainder that can be 
inferred from expression \eqref{cXX}.
There is a perturbation 
bound that shows the norm
difference between 
the top eigenvectors of
$C_i$ and $\bv_i$, see
section \ref{app:prooflemmaCvv}.

For $i\notin J_1$, 
the expression $C_i$ \eqref{covariance}  becomes the covariance of the noise slice:
\begin{equation}
	C_i = Z_i^t Z_i
\end{equation}
where each row of $Z_i$ is real and independently drawn form $\cN_{m_3}(0, I)$, the $m_3$-variate normal distribution with zero mean and covariance matrix $I$.

The covariance matrix $C_i$ defines both the spread (variance) and the orientation (covariance) of our
slice data. We are interested in the vector that represents the covariance matrix and its magnitude. The idea is to find the vector that points into the direction of the largest spread of the data $(\la_i, \bv_i)$.

For each slice, the  relation  \eqref{covariance} holds for $i= 1,\cdots, m_1$.
Let $\underline\la_i$ be
the top eigenvalue
and $\tbv_i$ be the 
top eigenvector of $C_i$.
We construct the matrix 
\begin{equation}
	V = \begin{bmatrix}
		\tla_1\tbv_1 & \cdots & \tla_{m_1}\tbv_{m_1}	\end{bmatrix}
\end{equation}
where  we set $\tla_i$ to $ \underline\la_i/\la, \forall i\in[m_1]$, and $\la =  \max(\underline\la_1,\cdots,\underline\la_{m_1})$.
Let $C$ be the matrix with each entry $(c_{ij})_{i,j\in[m_1]}$ defined by 
\begin{eqnarray}
	c_{ij}  = | \langle \tla_i \tbv_i,\tla_j \tbv_j \rangle| = \tla_i\tla_j| \langle  \tbv_i, \tbv_j \rangle| \le 1
	\label{scalar_product}
\end{eqnarray}
$C$ is a symmetric matrix and its  columns are not unit vectors. Note also that $C$ is related to the covariance matrix $V^tV$ in which we take the absolute values of the entries.

\begin{definition}[$\epsilon$-similarity] We call the $i^{th}$ and $j^{th}$ slices $\epsilon$-similar, if for a small $\epsilon>0$ we have,
	\begin{equation}
		c_{ij} \geq 1 - \frac{\epsilon}{2} \label{select}
	\end{equation}
\end{definition}

Let $\mathbf{d} = (d_1, \cdots, d_{m_1})$ a real vector defined by the marginal sum in $C$: $\forall i \in [m_1] $
\begin{equation}
	d_i =  \sum_{j\in [m_1]} c_{ij} \label{26}
\end{equation}
Using \eqref{select} and \eqref{26}, the elements of $J_1$ 
are some indices $i\in [m_1]$ which define  pairwise $\epsilon$-similar slices and  therefore obeys 
\begin{equation}\label{eq:d>k(1-e)}
	d_i \ge l (1-\frac{\epsilon}{2})
\end{equation}
where $l = |J_1|$.

\section{A 
	perturbation bound  
	on top eigenvectors}
\label{app:prooflemmaCvv}

First, we recall a lemma withdrawn from 
\cite{FeiziNIPS2017}:  

\begin{lemma}
	\label{lemme6}
	Let $Y=\bx \bx ^t+\sigma W$, where $\bx\in \mathbb{R}^n, \|\bx\|=1$ and $W\in \mathbb{R}^{n\times n} = \sum_{j=1}^{N}(z_jz_j^T - \mathbb{E}z_jz_j^T)$ where $z_j$ are i.i.d $\mathcal{N}(0,I_{n\times n}))$ gaussian random vectors. Let $\tilde\bx\in \R^n, \|\tilde\bx\|=1$ be the eigenvector corresponding to the largest eigenvalue of the matrix $Y$. Let the operator norm of the matrix $W$ be such that 
	\begin{equation}
		\|W\|\leq \eta_{n,N},
		\label{23}
	\end{equation}
	with probability at least $1 - \cO(n^{-2})$. Further, let
	\begin{equation}
		\sigma \leq \frac{c_0}{\eta_{n,N}},
		\label{24}
	\end{equation}
	for some positive constant $c_0 < 1/6$. Letting $M = \|\bx\|_{\infty}$, we have
	\begin{equation}
		\|\tilde\bx - \bx\|_{\infty} \leq \cO\Big(\sigma\big(\sqrt{N \log N} + M\eta_{n,N}\big)\Big)
		\label{25}
	\end{equation}
	with high probability as $n\rightarrow \infty$.
\end{lemma}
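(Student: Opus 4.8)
The statement is the standard entrywise (i.e.\ $\ell_\infty$) perturbation bound for the leading eigenvector of a rank-one-plus-noise matrix, and I would prove it by turning the eigenvalue equation into a fixed-point identity and bootstrapping a coarse $\ell_2$ bound into the desired $\ell_\infty$ bound. Throughout I work on the event of \eqref{23}--\eqref{24}, on which $\sigma\|W\|\le\sigma\eta_{n,N}\le c_0<1/6$. First I would record the $\ell_2$ control. The unperturbed matrix $\bx\bx^t$ has leading eigenvalue $1$ with eigenvector $\bx$ and a spectral gap of size $1$ to the remainder of its spectrum; since the perturbation $\sigma W$ has operator norm at most $c_0<1/6$, Weyl's inequality gives $|\hl-1|=\cO(\sigma\eta_{n,N})$ for the top eigenvalue $\hl$ of $Y$, while the Davis--Kahan $\sin\Theta$ theorem gives $\|\tilde\bx-\bx\|_2=\cO(\sigma\eta_{n,N})$ once the sign of $\tilde\bx$ is fixed so that $a:=\langle\bx,\tilde\bx\rangle\ge 0$. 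In particular $a=1-\tfrac12\|\tilde\bx-\bx\|_2^2=1-\cO(\sigma^2\eta_{n,N}^2)$.

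Next I would exploit the eigenvalue equation $Y\tilde\bx=\hl\tilde\bx$, which rearranges to $\tilde\bx=\hl^{-1}(a\,\bx+\sigma W\tilde\bx)$ and hence
\begin{equation}
	\tilde\bx-\bx=\Big(\tfrac{a}{\hl}-1\Big)\bx+\tfrac{\sigma}{\hl}\,W\tilde\bx .
\end{equation}
Taking $\ell_\infty$-norms and using $\|\bx\|_\infty=M$ yields
\begin{equation}
	\|\tilde\bx-\bx\|_\infty\le\big|\tfrac{a}{\hl}-1\big|\,M+\tfrac{\sigma}{\hl}\,\|W\tilde\bx\|_\infty .
\end{equation}
The scalar prefactor is $\cO(\sigma\eta_{n,N})$ by the previous step (it is dominated by the eigenvalue deviation $|\hl-1|$), so the first term contributes the $\sigma M\eta_{n,N}$ part of the bound.

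For the second term I split $\tilde\bx=\bx+(\tilde\bx-\bx)$, so that $\|W\tilde\bx\|_\infty\le\|W\bx\|_\infty+\|W(\tilde\bx-\bx)\|_\infty$; the last summand is negligible because $\|W(\tilde\bx-\bx)\|_\infty\le\|W\|\,\|\tilde\bx-\bx\|_2=\cO(\sigma\eta_{n,N}^2)$, which after the factor $\sigma/\hl$ is of higher order and is absorbed. Everything therefore reduces to the single remaining piece $\|W\bx\|_\infty$.

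The main obstacle, and the one genuinely probabilistic step, is the entrywise concentration $\|W\bx\|_\infty=\cO(\sqrt{N\log N})$. Writing $W\bx=\sum_{j=1}^N\big(z_j(z_j^t\bx)-\bx\big)$, each coordinate is a sum of $N$ independent, mean-zero, sub-exponential random variables built from products of jointly Gaussian entries, whose sub-exponential norms are $\cO(1)$ uniformly (using $\|\bx\|_2=1$, and independently of how the mass of $\bx$ is distributed). I would apply Bernstein's inequality for sub-exponential sums coordinatewise, obtaining a tail of order $\exp(-c\,t^2/N)$ for $t\lesssim N$, then take $t\asymp\sqrt{N\log N}$ and union-bound over the $n$ coordinates, which reproduces $\|W\bx\|_\infty=\cO(\sqrt{N\log N})$ (the logarithmic factor being exactly the price of the union bound). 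The delicate points are verifying that these Orlicz norms are genuinely $\cO(1)$ regardless of the spread of $\bx$, and then aligning the three high-probability events---the operator-norm bound \eqref{23}, the Davis--Kahan event, and the Bernstein event---by a final union bound so that \eqref{25} holds with high probability as $n\to\infty$.
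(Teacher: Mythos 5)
First, a point of comparison: the paper does not actually prove this lemma --- its ``proof'' is the single line ``See lemma 6 in \cite{FeiziNIPS2017}'' --- so your attempt has to be judged on its own merits. Your skeleton is the natural one (Weyl and Davis--Kahan for the eigenvalue and $\ell_2$ control, the fixed-point identity $\tilde\bx-\bx=(a/\hl-1)\bx+(\sigma/\hl)W\tilde\bx$, coordinatewise Bernstein for $\|W\bx\|_\infty$), and the first term $|a/\hl-1|\,M=\cO(\sigma M\eta_{n,N})$ is handled correctly. But there is a genuine gap at precisely the step you dismiss as negligible. Your bound $\sigma\|W(\tilde\bx-\bx)\|_\infty\le\sigma\|W\|\,\|\tilde\bx-\bx\|_2=\cO(\sigma^2\eta_{n,N}^2)$ is \emph{not} absorbed by the target $\cO\big(\sigma(\sqrt{N\log N}+M\eta_{n,N})\big)$: the hypotheses \eqref{23}--\eqref{24} only guarantee $\sigma\eta_{n,N}\le c_0$, so the residual $\sigma^2\eta_{n,N}^2$ can be of constant order, while the claimed bound \eqref{25} can be $o(1)$ when $\bx$ is delocalized ($M\asymp n^{-1/2}$). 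This failure occurs exactly in the regime where the lemma is used in this paper: in lemma \ref{lemCvv} one takes $N=1$, so $\sqrt{N\log N}=0$ and the target is $\cO(\sigma M\eta_{n,N})$, which your residual exceeds by a factor $\sigma\eta_{n,N}/M$, potentially of order $\sqrt{n}$.

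Nor can the step be repaired by iterating your inequality in $\ell_\infty$: $\|W(\tilde\bx-\bx)\|_\infty$ is not controlled by $\|W\|\,\|\tilde\bx-\bx\|_\infty$ (the hypothesis bounds the $2\to2$ norm, and $\|W\|_{\infty\to\infty}$ can be larger by a factor $\sqrt{n}$), and coordinatewise concentration cannot be applied directly to $W(\tilde\bx-\bx)$ because $\tilde\bx$ depends on $W$. This dependence is the well-known obstruction in entrywise eigenvector analysis; closing it requires a decoupling device --- a leave-one-out construction, or the full Neumann expansion $\tilde\bx\propto\sum_{k\ge0}(\sigma/\hl)^k W^k\bx$ with entrywise (not operator-norm) concentration bounds on each $\|W^k\bx\|_\infty$ --- which is the substance of the proof in \cite{FeiziNIPS2017} and of the $\ell_\infty$ perturbation literature generally. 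A secondary slip: your union bound runs over the $n$ coordinates, so Bernstein gives $\|W\bx\|_\infty=\cO(\sqrt{N\log n})$ with high probability; this matches the stated $\sqrt{N\log N}$ only if $N$ grows at least polynomially in $n$, which the statement does not assume (and which fails in the paper's own application with $N=1$).
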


\begin{proof}
	See lemma 6 in \cite{FeiziNIPS2017}. 
\end{proof}

We will use the notation 
introduced for 
$C_i = \la_i \bv_i\bv_i^t + W_i $ \eqref{covvector}, 
given $\la_i$ the top eigenvalue and 
$\bv_i$ the top eigenvector of $X_i^tX_i$. 
In the following $n=m_3$, 
to simplify our notation. 

\begin{lemma}
	\label{lemCvv}
	Let $\tbv_i$ be the top eigenvector $C_i$. 
	There exists a bound on $\|W_i\|$,
	such that for  $\la_i = \cO(n)$, and  $\alpha = \|\bv_i\|_{\infty}$, 
	\begin{equation}
		\|\tbv_i - \bv_i\|_{\infty}\le \cO(\frac{1}{\la_i} \alpha\log(n))
	\end{equation}
	holds with high probability as $n\rightarrow \infty$.
\end{lemma}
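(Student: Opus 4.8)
The plan is to reduce the statement to a direct application of Lemma \ref{lemme6}. Since multiplying a symmetric matrix by a positive scalar leaves its eigenvectors unchanged, the top eigenvector $\tbv_i$ of $C_i = \la_i \bv_i\bv_i^t + W_i$ from \eqref{covvector} coincides with the top eigenvector of $\bv_i\bv_i^t + \frac{1}{\la_i}W_i$. Because $\bv_i$ is a unit vector, this matches the normalized form $Y = \bx\bx^t + \sigma W$ of Lemma \ref{lemme6} upon setting $\bx = \bv_i$, $\sigma = 1/\la_i$, and $M = \|\bv_i\|_\infty = \alpha$. The whole difficulty is then to show that $\frac{1}{\la_i}W_i$ really plays the role of $\sigma W$ with the right operator-norm control.

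First I would unfold $W_i = X_i^tZ_i + Z_i^tX_i + Z_i^tZ_i$ from \eqref{cXX} and isolate the centered Wishart part. Writing $Z_i^tZ_i = (Z_i^tZ_i - m_2 I) + m_2 I$ and observing that the scalar shift $\frac{m_2}{\la_i}I$ does not affect eigenvectors, the effective perturbation is $\frac{1}{\la_i}\big[(Z_i^tZ_i - m_2 I) + X_i^tZ_i + Z_i^tX_i\big]$. The centered term $Z_i^tZ_i - m_2 I = \sum_{j=1}^{m_2}(z_j z_j^t - \E z_j z_j^t)$, with $z_j$ the rows of $Z_i$, has exactly the sum-of-centered-outer-products structure required by Lemma \ref{lemme6} with $N = m_2$ and $n = m_3$.

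Next I would establish the operator-norm bound $\|W_i\|\le \eta_{n,N}$ holding with probability $1-\cO(n^{-2})$. The Wishart contribution is $\cO(n)$ by the largest-eigenvalue estimate already used around \eqref{convergenceDistribution}; the rank-one cross terms satisfy $\|X_i^tZ_i\| = \gamma\,\bw(i)\,\|Z_i^t\bu\|$, and since $Z_i^t\bu \sim \cN(0,I)$ in $\R^{m_3}$ this concentrates near $\gamma\bw(i)\sqrt{m_3} = \cO(\sqrt{\la_i\, n})$, which is again $\cO(n)$ in the relevant regime. Hence $\eta_{n,N} = \cO(n)$. The smallness hypothesis $\sigma \le c_0/\eta_{n,N}$ then reads $\la_i \ge \eta_{n,N}/c_0 = \Omega(n)$, so the lemma applies precisely when $\la_i = \Theta(n)$ (consistent with the stated $\la_i = \cO(n)$ calibration). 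With $\sigma = 1/\la_i$, $M = \alpha$ and $N = m_2 = \Theta(n)$, Lemma \ref{lemme6} yields $\|\tbv_i - \bv_i\|_\infty \le \cO\big(\frac{1}{\la_i}(\sqrt{N\log N}+M\eta_{n,N})\big)$, and identifying the dominant logarithmic contribution delivers the claimed $\cO(\frac{1}{\la_i}\alpha\log n)$.

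I expect the main obstacle to be the cross terms $X_i^tZ_i + Z_i^tX_i$: they are of the same order as the Wishart fluctuations and are not of the centered sum-of-outer-products form assumed by Lemma \ref{lemme6}, so their effect on the top eigenvector must be argued directly. The favorable point I would exploit is that $X_i^tZ_i\,\bv_i = \gamma\bw(i)\langle Z_i^t\bu,\bv_i\rangle\,\bv_i$ is parallel to $\bv_i$ and therefore induces no transverse rotation at leading order; the only genuinely transverse cross contribution comes from $Z_i^tX_i\,\bv_i = \gamma\bw(i)\,Z_i^t\bu$, which I would show is suppressed by the spectral gap $\sim \la_i$ and still respects the entrywise bound driving Lemma \ref{lemme6}. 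Making this $\ell_\infty$ argument go through in the presence of these terms, and verifying which of $\sqrt{N\log N}$ and $M\eta_{n,N}$ governs so that the rate collapses to $\cO(\frac{1}{\la_i}\alpha\log n)$, is the delicate step.
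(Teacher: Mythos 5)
Your reduction is the same as the paper's---rescale to $C_i/\la_i = \bv_i\bv_i^t + W_i/\la_i$, set $\sigma = 1/\la_i$, $M=\alpha$, and invoke Lemma \ref{lemme6}---but your quantitative instantiation cannot produce the claimed rate, and the step you yourself flag as ``delicate'' is exactly where it breaks. With your choices $N = m_2 = \Theta(n)$ and $\eta_{n,N} = \cO(n)$, the conclusion of Lemma \ref{lemme6} reads
\begin{equation*}
\|\tbv_i - \bv_i\|_{\infty} \le \cO\Big(\frac{1}{\la_i}\big(\sqrt{n\log n} + \alpha\, n\big)\Big),
\end{equation*}
and with your calibration $\la_i = \Theta(n)$ this is $\cO\big(\sqrt{\log n/n} + \alpha\big)$: an order-one bound that misses the target $\cO(\alpha\log n/\la_i) = \cO(\alpha\log n/n)$ by a factor of roughly $\sqrt{n/\log n}$ from the $\sqrt{N\log N}$ term alone (recall $\alpha = \|\bv_i\|_\infty \le 1$). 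No ``identification of the dominant logarithmic contribution'' can collapse $\frac{1}{\la_i}\sqrt{N\log N}$ with $N = \Theta(n)$ down to $\frac{\alpha\log n}{\la_i}$, and the spectral-gap sketch for the cross terms in your last paragraph does not repair this, since Lemma \ref{lemme6} is the only $\ell_\infty$ tool in play and you are feeding it parameters under which its output is simply too weak.

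The paper's proof makes two different choices that yield the stated rate. First, it sets $N = 1$, so the $\sqrt{N\log N}$ contribution vanishes identically and only the $M\eta_{n,N}$ term survives. Second---and crucially---it bounds the whole perturbation \emph{polylogarithmically}: it argues $\pr(\|Z_i\| > t) \le e^{-ct^2}$, hence $\pr(\|Z_i^tZ_i\| > t) \le e^{-ct}$, and with $t = c'\log n$ plus an analogous bound $\pr(\|X_i^tZ_i\| > c'\sqrt{\log n}) \le n^{-cc'}$ (after recentering $Z_i^tZ_i$ by a scalar multiple of the identity, exactly as you propose), it concludes $\|W_i\| \le c'(\sqrt{\log n}+\log n) = \eta_{n,N}$ with probability $1-\cO(n^{-cc'})$. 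The smallness hypothesis then only requires $\la_i \ge \eta_{n,N}/c_0 = \Omega(\log n)$---far milder than your $\Omega(n)$, and compatible with the stated $\la_i = \cO(n)$---and the conclusion $\cO(\sigma M\eta_{n,N}) = \cO(\frac{1}{\la_i}\alpha\log n)$ drops out directly. Your estimates (Wishart edge at $(\sqrt{m_2}+\sqrt{m_3})^2 = \cO(n)$, cross terms $\cO(\sqrt{\la_i n})$) are the standard concentration facts and are in visible tension with the paper's logarithmic bound on $\|Z_i^tZ_i\|$, which in the usual theory holds only for recentered fluctuations rather than the raw operator norm; but the point for your write-up is that the rate in Lemma \ref{lemCvv} hinges on that logarithmic bound together with the choice $N=1$, and your route, carried out with your own norm estimates, does not reach it.
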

\begin{proof} 
	Let us focus on the noise term 
	$W_i =X_i^tZ_i + Z_i^tX_i + Z_i^tZ_i$. 
	Our goal is to select a suitable bound on it in such a way to fulfill the hypothesis of lemma 
	\ref{lemme6}. 
	
	$Z_i$ obeys a standard normal distribution, 
	therefore $\|Z_i\|$ is a sub-Gaussian 
	random variable \cite{vershynin2011introduction}
	and we have 
	\begin{equation}
		\pr(\|Z_i\|>t) \leq \exp(-ct^2)
	\end{equation}
	Since 
	\begin{eqnarray}
		&&
		\|Z_i^t Z_i\| = \sup_{\mathbf{y}\in S^{m_3-1}, \bx\in S^{m_3 -1}} \langle Z_i^t Z_i\bx,\mathbf{y} \rangle =\sup_{\mathbf{y}\in S^{m_3-1}, \bx\in S^{m_3 -1}} \langle Z_i\bx,Z_i\mathbf{y} \rangle 
		\crcr
		&& =\sup_{ \bx\in S^{m_3 -1}} \langle Z_i\bx,Z_i\mathbf{x} \rangle = \|Z_i\|^2, 
	\end{eqnarray}
	we have $\pr(\|Z_i^t Z_i\|>t)  \leq  \exp(-ct)$.
	Fix $t = c'\log(n)$, with $c'>2/c,$ to reach
	\begin{equation}
		\pr\big(\|Z_i^t Z_i\|>c'\log(n)\big) \leq  n^{-cc'}. \label{ZZ}
	\end{equation}
	To have each entry of $Z_i^tZ_i$ centered, we take $Z_i^tZ_i - nI$ where $I$ is the identity matrix of order $n$. Consequently, the mean of the noise is a scaled identity matrix, substracting this term does not change the eigenvector structure.
	
	Next, we bound the operator norm of $X_i^tZ_i$ in probability. Each entries of $X^t_iZ_i$ is a centered sub-gaussian random variable.  We use  
	again the same recipe. For a constant $c>0$ and $t =c'\sqrt{\log(n)}$, $c'>2/c,$ we obtain
	\begin{equation}
		\pr(\|X^tZ\|> c'\sqrt{\log(n)}) \leq n^{-cc'}
		\label{XZ}
	\end{equation}
	Combining \eqref{ZZ} and \eqref{XZ}, 
	we get an upper bound of the noise
	term as 
	\begin{eqnarray}
		\|W_i\|\le c'(\sqrt{\log(n)}+\log(n)) = \eta_{n,N}
	\end{eqnarray}
	that holds with probability 
	$1 - \cO(n^{-cc'}) \ge 1 - \cO(n^{-2})$,
	as $n$ goes to infinity. 
	
	We re-write $C_i /\la_i  = \bv_i\bv_i^t + W_i/\la_i$, 
	and set $\sigma = 1/\la_i$. 
	We verify that 
	\begin{equation}
		\sigma\le \frac{c_0}{
			c' (\sqrt{\log(n)}+\log(n))}
	\end{equation}
	which certainly occurs for large
	enough $n$. Then apply the
	$l_{\infty}$-perturbation bound
	of lemma \ref{lemme6} to deduce
	that for $\hat\bv_i$ the top  $C_i /\la_i$ that 
	\begin{equation}
		\|\hat\bv_i  - \bv_i\|_{\infty}
		\leq \cO\Big(\sigma \big(0 + M\eta_{n,N} \big)\Big)
		\le  
		\cO(\frac{1}{\la_i} \alpha\log(n))
	\end{equation}
	after fixing $N=1$, $\eta_{n,1}=c' (\sqrt{\log(n)}+\log(n))$, $M = \alpha = \|\bv_i\|_\infty$. 
	The same holds for $\tbv_i$ top
	eigenvector of $C_i$. 
	
	\qed 
\end{proof}

\section{Proofs of theorems}
\label{app:proofofthm}

\subsection{Fundamental lemma on 
	TW distribution}
\label{app:lem3}

For $i\notin J_1$, 
i.e. out of the cluster, 
$C_i$ \eqref{covariance}  becomes the covariance of the pure noise slice:
\begin{equation}
	C_i = Z_i^t Z_i
\end{equation}
where each row of $Z_i$ is a real matrix independently drawn form $\cN_{m_3}(0, I)$.  $C_i$ has a white Wishart distribution $\cW_{m_3}(m_2, I)$ \cite{johnstone2001}.
It is also known that, 
at large $m_2$ and large $m_3$, such that $m_2/m_3 \to \gamma >1$, 
the largest eigenvalue distribution $\la_i$ of $C_i$  fulfills at the limit the following:  
\begin{equation}\label{convergenceDistribution2}
	\frac{\la_i - \mu_{m_2m_3,1}}{\sigma_{m_2m_3,1}}\xrightarrow[]{\cD} F_1
\end{equation}
where $F_1$ is the Tracy-Widom (TW) cumulative distribution function (cdf), with parameters
\begin{eqnarray}
	\mu_{m_2m_3,1} &=& (\sqrt{m_2-1} + \sqrt{m_3})^2\crcr
	\sigma_{m_2m_3,1} &=& \sqrt{\mu_{m_2m_3,1}} \big(\frac{1}{\sqrt{m_2 - 1}}+\frac{1}{\sqrt{m_3}}\big)^{\frac{1}{3}}
\end{eqnarray}
In \cite{johnstone2001}, there could be  a limit such that  \eqref{convergenceDistribution2} still holds  if $m_2<m_3$, 
both $m_2$ and $m_3$ still going to infinity, and if the role of $m_2$ and $m_3$ are reversed in the expression of $\mu_{m_2m_3,1}$ and $\sigma_{m_2m_3,1}$. It is also claimed that  \eqref{convergenceDistribution2} is valid at the limit, with a  satisfactory approximation whenever $m_2, m_3 \ge 10$. 
We will be above that range for these parameters in our next  computations. 
We will drop the subscript $(m_2m_3,1)$, and simply write  $\mu = \mu_{m_2m_3,1}$ and $\sigma = \sigma_{m_2m_3,1}$. 

The gamma distribution approximates the TW and is given by the following probability density function \cite{ietVlokOlivier2012}
\begin{equation}\label{pdfTWApprox}
	p(x) = \frac{(x-x_0)^{k-1}}{\theta^k\Gamma(k)}\exp\Big[\frac{-(x-x_0)}{\theta}\Big]
\end{equation}
where $x_0 = -9.8209$ is the location (shift) parameter, $k=46.5651$ 
calls the shape, $\theta=0.1850$ the scale and $\Gamma(k)$ the Gamma function. This mean 
of this random variable equals $k\theta + x_0$.

The following lemma is crucial 
in the proof of theorem \ref{thm1}. 

\begin{lemma}\label{lem1}
	Let $Z_i$ be a $M\times N$ matrix for $i\in [n]$ where for each $i$, each row of $Z_i$ is real and independently $N$-variate normal distribution with zero mean and covariance matrix $I$. Then the $N\times N$ matrix $Y_i=Z_i^tZ_i$ has a white Wishart distribution $\cW_N(M, I)$. Denote $\la_1, \cdots,\la_n$ be the largest eigenvalue of $Y_i$ for $i\in[n]$. For $\mathbf{a}=(a_1,\cdots,a_n)$ such that $\forall i, a_i\ge 0$,
	\begin{itemize}
		\item if $\sum_i^n a_i \le 1$, for $\la = \Omega(\mu)$, we have
		\begin{equation}
			\frac{1}{\la}	\sum_{i=1}^{n} a_i\la_i \le \sqrt{\log(n)}
		\end{equation}
		with probability at least $1 - en^{-c_1}$ for a constant $c_1>0$.

		\item if $1 <\sum_i^n a_i \le n $, for $\la = \Omega(n\mu)$, we have
		\begin{equation}
			\label{Onmu}
			\frac{1}{\la}	\sum_{i=1}^{n} a_i\la_i \le \sqrt{n}
		\end{equation}
		with probability at least $1-e n^{-c_2}$ for a constant $c_2>0$.
	\end{itemize}
\end{lemma}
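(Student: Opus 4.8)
The plan is to exploit that each $\la_i$ is the largest eigenvalue of a white Wishart matrix $\cW_N(M,I)$, so that after the Tracy--Widom centering it fluctuates only at the scale $\sigma$ around the deterministic location $\mu$. I would write $\la_i = \mu + \sigma s_i$, where the rescaled variables $s_i$ are, via \eqref{convergenceDistribution2}, distributed approximately as the shifted gamma law \eqref{pdfTWApprox} (so $s_i - x_0 \sim \Gamma(k,\theta)$), and are independent across $i$ because the slices $Z_i$ are independent. The only quantitative input needed is a right-tail estimate for a single $s_i$; the rest of the argument will be the crude but uniform bound $\sum_i a_i\la_i \le (\max_i \la_i)\sum_i a_i$, valid since $a_i\ge 0$.

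First I would establish the single-slice tail. From the gamma surrogate \eqref{pdfTWApprox}, a Chernoff/incomplete-gamma estimate gives, for $u>k\theta$,
\[
\pr(s_i > x_0 + u) = \pr\big(\Gamma(k,\theta)>u\big) \le \Big(\frac{u}{k\theta}\Big)^{k} e^{\,k - u/\theta},
\]
so the tail decays like $e^{-u/\theta}$ up to a polylogarithmic prefactor. Choosing the threshold $u_n = \theta(1+c_1)\log n$ makes the exponential part equal to $n^{-(1+c_1)}$, and the subpolynomial prefactor $(u_n/k\theta)^{k}$ is absorbed into a constant (this is the source of the constant appearing in the probability). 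A union bound over the $n$ identically distributed slices then yields
\[
\pr\Big(\max_{i\in[n]} s_i > x_0 + u_n\Big) \le n\cdot \cO\big(n^{-(1+c_1)}\big) \le e\,n^{-c_1},
\]
so with probability at least $1-e\,n^{-c_1}$ one has $\max_i (s_i)_+ \le x_0+u_n = \cO(\log n)$. I would stress that this is a single event not depending on $\ba$, so the conclusions hold uniformly over all admissible weight vectors.

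Next I would assemble the two cases on that event. Writing $\max_i \la_i = \mu + \sigma\max_i(s_i)_+$ and using $\sigma\max_i(s_i)_+ = \cO(\sigma\log n)$ together with the scale separation $\sigma = o(\mu)$ and $\log n = o(\mu)$ in the relevant regime (slice dimensions and number of slices of comparable order), I get $\max_i \la_i \le \mu(1+o(1))$. For the first case, $\sum_i a_i \le 1$ and $\la=\Omega(\mu)$ give
\[
\frac{1}{\la}\sum_{i=1}^n a_i\la_i \le \frac{(\max_i\la_i)\sum_i a_i}{\la} \le \frac{\mu(1+o(1))}{\la} = \cO(1) \le \sqrt{\log n}
\]
for $n$ large. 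For the second case, $\sum_i a_i \le n$ and $\la=\Omega(n\mu)$ give the analogous $\frac{1}{\la}\sum_i a_i\la_i \le n\mu(1+o(1))/\la = \cO(1) \le \sqrt{n}$. The loose-looking right-hand sides are exactly what make these inequalities hold once $n$ is large, since $\sqrt{\log n}$ and $\sqrt n$ eventually dominate the $\cO(1)$ bound; they are also the forms needed downstream in Theorems \ref{thm1} and \ref{thm2}.

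The main obstacle is the very first step: turning the asymptotic convergence \eqref{convergenceDistribution2} into a genuine finite-$n$ tail bound with the stated $e\,n^{-c_1}$ probability. I would handle this by working directly with the gamma surrogate \eqref{pdfTWApprox} (legitimate in our parameter range $m_2,m_3\ge 10$) and taking care that the polynomial prefactor does not spoil the exponent. An alternative, avoiding the approximation entirely, is to note $\la_i = \|Z_i\|^2$ and invoke Gaussian operator-norm concentration $\|Z_i\|\le \sqrt{m_2}+\sqrt{m_3}+t$ with probability $1-e^{-t^2/2}$, which yields $\max_i\la_i \le \mu(1+o(1))$ after the same union bound; everything after that step is elementary.
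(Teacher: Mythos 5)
Your argument is correct, but it takes a genuinely different route from the paper's proof. The paper works at the level of the weighted sum itself: after centering by the gamma-surrogate mean $k\theta+x_0$ and rescaling by $\sigma$, it picks the thresholds $t=\sqrt{\mu\log n}$ (first case) and $t=\mu n^{3/2}$ (second case), checks deterministically that $\frac{t\sigma}{\la}+\frac{\sigma k\theta+\mu}{\la}$ is dominated by $\sqrt{\log n}$ resp.\ $\sqrt{n}$ under $\la=\Omega(\mu)$ resp.\ $\la=\Omega(n\mu)$, and then applies a Hoeffding-type inequality for sums of independent sub-gaussian variables (Proposition 5.10 of \cite{vershynin2011introduction}, with $\sigma_m=\max_i\|(\la_i-\mu)/\sigma\|_{\psi_2}$) directly to $\sum_i a_i\la_i$, obtaining failure probability $e\,n^{-c_1}$ in the first case and even the exponential rate $e\exp(-c_2 n)$ in the second. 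You never touch the sum: you prove a per-slice Chernoff tail for the gamma surrogate, union-bound to control $\max_i\la_i\le\mu(1+o(1))$ on a single event, and finish with the crude inequality $\sum_i a_i\la_i\le(\max_i\la_i)\|\ba\|_1$. Your route is more elementary (no $\psi_2$-norm machinery), in fact delivers the stronger conclusion $\frac{1}{\la}\sum_i a_i\la_i=\cO(1)$ in both cases (the stated $\sqrt{\log n}$ and $\sqrt{n}$ are loose), and has one genuine advantage: your high-probability event does not depend on $\ba$, so the bound holds simultaneously for all admissible weight vectors — which matters downstream, since in the proof of theorem \ref{thm1} the weights are themselves random (built from the $\tbv_j$), and a sum-level concentration bound applied to a random $\ba$ tacitly needs exactly this uniformity. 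Your operator-norm alternative ($\la_i=\|Z_i\|^2$ plus Gaussian concentration of $\|Z_i\|$) is also the cleanest repair of the soft spot both proofs share, namely promoting the asymptotic law \eqref{convergenceDistribution2} and its gamma approximation \eqref{pdfTWApprox} to a finite-$n$ tail bound. What you give up is the exponential failure rate in the second case — the union bound caps you at polynomial decay — but the lemma only claims $1-e\,n^{-c_2}$, so this costs nothing here. Two small blemishes to fix in your write-up: the prefactor $(u_n/(k\theta))^k=\cO((\log n)^k)$ with $k\approx 46.6$ is not absorbed into a constant but by shaving the exponent, i.e.\ $n(\log n)^k n^{-(1+c_1)}\le e\,n^{-c_1'}$ for any $c_1'<c_1$ and $n$ large; and your step $\max_i\la_i\le\mu(1+o(1))$ uses $\sigma\log n=o(\mu)$, which holds in the intended regime $m_2,m_3=\Theta(n)$ but should be stated as a hypothesis, since the lemma's $n$, $M$, $N$ are a priori unrelated.
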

\begin{proof} 
	By definition of Wishart distribution with the gamma approximation of TW random variable, we know that 	
	$	\frac{\la_i -\mu}{\sigma} \xrightarrow[]\cD F_1.$
	The expected value of the TW  for real random variable is equal to $k\theta + x_0$. For $t>0$ we have
	\begin{eqnarray}
		&& 
		\pr\Big(|\sum_i a_i (\frac{\la_i-\mu}{\sigma}  -k\theta-x_0)|\ge t\Big)
		\ge \pr\Big(\sum_i a_i (\frac{\la_i-\mu}{\sigma}  -k\theta - x_0)\ge t\Big) \crcr
		&& \ge  \pr\Big(\sum_i a_i \la_i \ge t\sigma +  \Big[\sigma(k \theta + x_0) + \mu\Big]\sum_i a_i\Big)\crcr
		&& \ge \pr\Big(\frac{1}{\la}\sum_i a_i \la_i \ge \frac{t\sigma}{\la} +  \frac{\Big[\sigma(k \theta + x_0) + \mu\Big]}{\la}\sum_i a_i\Big)\crcr
		&& \ge\pr\Big(\frac{1}{\la}\sum_i a_i \la_i \ge\frac{t\sigma}{\la} +  \frac{(\sigma k\theta + \mu)}{\la}\sum_i a_i\Big)
	\end{eqnarray}
	For $\sum_i a_i \le 1$, we have
	\begin{equation}
		\pr\Big(|\sum_i a_i (\frac{\la_i-\mu}{\sigma}  -k\theta-x_0)|\ge t\Big)\ge \pr\Big(\frac{1}{\la}\sum_i a_i \la_i \ge\frac{t\sigma}{\la} +  \frac{(\sigma k\theta + \mu)}{\la}\Big)
	\end{equation}
	By hypothesis, $\la =\Omega(\mu)$, and take $t = \sqrt{\mu\log(n)} $, with
	\begin{equation}
		\frac{\sigma k \theta}{c\mu} =\frac{k\theta}{c\sqrt{\mu}}(\frac{1}{\sqrt{M-1}} + \frac{1}{\sqrt{N}})^{\frac{1}{3}} 
	\end{equation}
	\begin{equation}
		\frac{t\sigma}{c \mu } 
		=(\frac{1}{\sqrt{M-1}} + \frac{1}{\sqrt{N}})^{\frac{1}{3}}\frac{\sqrt{\log(n)}}{c}
	\end{equation}
	For $c > 2$, we have $\frac{t\sigma}{\la} +  \frac{(\sigma k\theta + \mu)}{\la} \le \sqrt{\log(n)}$, then
	\begin{equation} \label{34}
		\pr\Big(\frac{1}{\la}\sum_i a_i \la_i > \frac{t\sigma}{\la} +  \frac{(\sigma k\theta + \mu)}{\la} \Big) \ge \pr\Big(\frac{1}{\la}\sum_i a_i \la_i \ge \sqrt{\log (n)}\Big)
	\end{equation}
	Using proposition 5.10 \cite{vershynin2011introduction}
	(Hoeffding-type inequality),
	with $\sigma_m =\max_i \|\frac{\la_i - \mu}{\sigma}\|_{\psi_2}$ and for $c>0$ an absolute constant,
	\begin{eqnarray}
		\pr\Big(\frac{1}{\la}\sum_i a_i \la_i \ge \sqrt{\log (n)}\Big)
		\le e \exp(\frac{-c\mu\log(n)}{\sigma_m\|a\|_1^2}) \le  e \exp(\frac{-c\mu\log(n)}{\sigma_m})
	\end{eqnarray}
	We introduce $c_1 = c\frac{\mu}{\sigma_m}>0$, 
	and therefore 
	$$\frac{1}{\la}\sum_i a_i \la_i \le \sqrt{\log (n)}$$
	holds  with probability at least $1 - en^{-c_1}$.
	
	We now address the second
	statement \eqref{Onmu}.
	For $1 <\sum_i a_i \le n$, we have
	\begin{equation}
		\pr\Big(\frac{1}{\la}\sum_i a_i \la_i >\frac{t\sigma}{\la} +  \frac{(\sigma k\theta + \mu)}{\la}\sum_i a_i\Big) \ge \pr\Big(\frac{1}{\la}\sum_i a_i \la_i >\frac{t\sigma}{\la} +  \frac{(\sigma k\theta + \mu)}{\la}n\Big)
	\end{equation}
	By hypothesis, $\la =\Omega(\mu n)$, and take $t = \mu n^{\frac{3}{2}}$.
	For $c >2$, we have $ \frac{(\sigma k\theta + \mu)}{\la} \le \sqrt{n}$, then
	\begin{equation}
		\pr\Big(\frac{1}{\la}\sum_i a_i \la_i > \frac{t\sigma}{\la} +  \frac{(\sigma k\theta + \mu)}{\la}n\Big) \ge \pr\Big(\frac{1}{\la}\sum_i a_i \la_i > \sqrt{n}\Big).
	\end{equation}
	We make use, once again, of proposition 5.10 in \cite{vershynin2011introduction}, 
	with the same above $\sigma_m$, and for $c>0$ an absolute constant,
	\begin{eqnarray}
		\pr\Big(\frac{1}{\la}\sum_i a_i \la_i \ge \sqrt{n}\Big)
		&\le& e \exp(\frac{-c\mu^2 n^{3}}{\sigma_m\|a\|_1^2}) 
		\le  e \exp(\frac{-c\mu^2 n}{\sigma_m})
	\end{eqnarray} 
	Then, setting 
	$c_2 = c\frac{\mu^2}{\sigma_m}>0$, 
	we obtain that 
	$$\frac{1}{\la}\sum_i a_i \la_i \le \sqrt{n}$$ holds with probability at least $1 - e\exp(-c_2 n)$. 
	
	\qed\end{proof}

We are in position to 
prove our main statement.

\subsection{Proof of theorem \ref{thm1}}
\label{app:proofthm1}
\begin{theorem}\label{thm1}	Let $l = |J_1|$, assume that $\sqrt{\epsilon}\le \frac{1}{m_1-l}$. $\forall i,n\in J_1 $, for $\la = \Omega(\mu)$, there is a constant $c_1>0$ such that
	\begin{equation}
		\label{didn}
		|d_i - d_{n}| \le l\frac{\epsilon}{2} + \sqrt{\log(m_1-l)}
	\end{equation}
	holds with probability at least $1-e(m_1-l)^{-c_1}$.
\end{theorem}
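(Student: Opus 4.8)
The plan is to split the marginal sum \eqref{26} according to whether the summation index lies inside or outside the cluster: for $i,n\in J_1$ write
\begin{equation}
	d_i-d_n=\sum_{j\in J_1}(c_{ij}-c_{nj})+\sum_{j\in\bar J_1}(c_{ij}-c_{nj}),
\end{equation}
and bound the two pieces separately, the first by $l\epsilon/2$ and the second by $\sqrt{\log(m_1-l)}$. The triangle inequality then delivers \eqref{didn}.

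For the in-cluster piece, note that for $i,n,j\in J_1$ every pair is $\epsilon$-similar, so by \eqref{select} and \eqref{scalar_product} both $c_{ij}$ and $c_{nj}$ lie in $[1-\epsilon/2,1]$, giving $|c_{ij}-c_{nj}|\le\epsilon/2$. The diagonal entries $c_{ii}=\tla_i^{2}$ and $c_{nn}=\tla_n^{2}$ are handled the same way, since \eqref{select} already forces $\tla_i,\tla_n$ and the relevant inner products close to $1$, so they too contribute at most $\epsilon/2$ each. Summing the $l$ terms yields $\bigl|\sum_{j\in J_1}(c_{ij}-c_{nj})\bigr|\le l\epsilon/2$.

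The out-of-cluster piece is where Lemma~\ref{lem1} is brought in, and it is the delicate step. For $j\in\bar J_1$ the matrix $C_j=Z_j^tZ_j$ is white Wishart, so its top eigenvalue $\underline\la_j$ obeys the Tracy--Widom behaviour used in Lemma~\ref{lem1}. I would write $\sum_{j\in\bar J_1}c_{ij}=\frac1\la\sum_{j\in\bar J_1}a_j\underline\la_j$ with $a_j=\tla_i|\langle\tbv_i,\tbv_j\rangle|\ge 0$, and aim to land in the \emph{first} regime of Lemma~\ref{lem1}, which requires only the mild hypothesis $\la=\Omega(\mu)$ but demands $\sum_{j\in\bar J_1}a_j\le 1$. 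Since $\tla_i\le 1$, it suffices that $\sum_{j\in\bar J_1}|\langle\tbv_i,\tbv_j\rangle|\le 1$; this is precisely where the standing assumption $\sqrt\epsilon\le 1/(m_1-l)$ is used: the perturbation bound of Lemma~\ref{lemCvv} shows $\tbv_i$ is close to the fixed signal direction $\bv$ while each $\tbv_j$ is an essentially uncorrelated noise eigenvector, so each cross correlation is small, and multiplying the controlled size of $|\langle\tbv_i,\tbv_j\rangle|$ by the $m_1-l$ terms the threshold keeps the total below $1$. Lemma~\ref{lem1} (first bullet) with $n=m_1-l$ then gives $\sum_{j\in\bar J_1}c_{ij}\le\sqrt{\log(m_1-l)}$ with probability at least $1-e(m_1-l)^{-c_1}$, and the same argument bounds $\sum_{j\in\bar J_1}c_{nj}$.

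To finish, observe that both out-of-cluster sums are nonnegative and bounded by $S:=\sqrt{\log(m_1-l)}$, so their difference also satisfies $\bigl|\sum_{j\in\bar J_1}(c_{ij}-c_{nj})\bigr|\le S$; adding this to $l\epsilon/2$ gives \eqref{didn}. The probability follows from a union bound over the (at most two) events coming from Lemma~\ref{lem1}, the numerical factor being absorbed into the constant so that the stated bound $1-e(m_1-l)^{-c_1}$ is retained. The main obstacle, as indicated, is verifying the hypothesis $\sum_{j\in\bar J_1}a_j\le 1$, i.e.\ controlling the correlations between the cluster top eigenvector and the noise top eigenvectors via Lemma~\ref{lemCvv} together with the threshold condition; once this is secured, the remaining estimates are routine.
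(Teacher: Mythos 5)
There is a genuine gap, and it sits exactly where you flagged it: the hypothesis $\sum_{j\in\bar{J}_1}a_j\le 1$ for the first regime of lemma \ref{lem1} cannot be secured by your route. With your choice $a_j=\tla_i|\langle\tbv_i,\tbv_j\rangle|$ you need $\sum_{j\in\bar{J}_1}|\langle\tbv_i,\tbv_j\rangle|\le 1$, but neither ingredient you invoke delivers this. The $\epsilon$-similarity condition \eqref{select} and the threshold $\sqrt{\epsilon}\le 1/(m_1-l)$ constrain only pairs \emph{inside} $J_1$; they say nothing about the overlap between a cluster eigenvector $\tbv_i$ and the noise eigenvectors $\tbv_j$, $j\in\bar{J}_1$. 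Lemma \ref{lemCvv} does not help either: it compares $\tbv_i$ to the signal direction $\bv_i$ in $\ell_\infty$ norm, and gives no bound on $|\langle\tbv_i,\tbv_j\rangle|$. Even under the heuristic that noise eigenvectors are generic, each overlap is of order $1/\sqrt{m_3}$, so the sum is of order $(m_1-l)/\sqrt{m_3}$, which can easily exceed $1$ (in the paper's own simulations $m_1=m_3=50$, $l=10$, giving roughly $40/\sqrt{50}\approx 5.7$). Falling back to the second regime of lemma \ref{lem1} would require $\la=\Omega(\mu m_1)$, which is the hypothesis of theorem \ref{thm2}, strictly stronger than the $\la=\Omega(\mu)$ available here, and would also produce the weaker bound $\sqrt{m_1-l}$ instead of $\sqrt{\log(m_1-l)}$.

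The paper's proof avoids the problem by never bounding the two out-of-cluster sums separately: it bounds their \emph{difference} termwise before applying Cauchy--Schwarz,
\begin{equation}
	\bigl|c_{nj}-c_{ij}\bigr|
	=\Bigl|\tla_n|\langle\tbv_n,\tbv_j\rangle|-\tla_i|\langle\tbv_i,\tbv_j\rangle|\Bigr|\tla_j
	\le \bigl|\langle\tla_n\tbv_n-\tla_i\tbv_i,\tbv_j\rangle\bigr|\tla_j
	\le \|\tla_n\tbv_n-\tla_i\tbv_i\|\,\tla_j,
\end{equation}
where the unit norm of $\tbv_j$ makes all cross-correlations irrelevant. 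Since $i,n\in J_1$ are $\epsilon$-similar to \emph{each other}, $\|\tla_n\tbv_n-\tla_i\tbv_i\|^2\le \tla_n^2+\tla_i^2-2\tla_n\tla_i|\langle\tbv_n,\tbv_i\rangle|\le 2-2(1-\epsilon/2)=\epsilon$, so the coefficients $a_j=\|\tla_n\tbv_n-\tla_i\tbv_i\|$ satisfy $\sum_{j\in\bar{J}_1}a_j\le(m_1-l)\sqrt{\epsilon}\le 1$ directly from the standing assumption --- this, not the control of cluster--noise overlaps, is the actual role of the threshold $\sqrt{\epsilon}\le 1/(m_1-l)$. A single application of lemma \ref{lem1} (first bullet, with $n=m_1-l$ and $\la=\Omega(\mu)$) then yields the $\sqrt{\log(m_1-l)}$ bound with the stated probability, with no union bound needed. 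Your in-cluster bound of $l\epsilon/2$ is correct and matches the paper, and your final reduction (difference of two nonnegative sums each bounded by $S$) is sound but moot once the difference trick is used; the out-of-cluster step as you propose it, however, would fail.
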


\begin{proof} 
	We start by the expression \eqref{26}. 
	Let $i,n \in J_1$, 
	\begin{equation}
		d_n -d_{i}= \sum_{j\in J_1} (c_{nj} - c_{ij}) + \sum_{j\in \bar{J}_1}( c_{nj} - c_{ij})
		\label{dn-di}
	\end{equation} 
	for $j\in J_1$, we have $1>c_{nj},c_{ij} > 1-\frac{\epsilon}{2}$
	\begin{eqnarray}
		1 - (1-\frac{\epsilon}{2}) =\frac{\epsilon}{2} \ge c_{ij}- (1-\frac{\epsilon}{2}) \ge c_{ij} - c_{nj}
	\end{eqnarray}
	Using the same approach, we have $\frac{\epsilon}{2} \ge |c_{ij} - c_{nj}|, \forall i,n\in J_1$. We evaluate
	\begin{equation}
		|\sum_{j\in J_1} (c_{nj} - c_{ij})|\le l \frac{\epsilon}{2} 
	\end{equation}
	Concerning the second term in \eqref{dn-di}, 
	we write 
	\begin{eqnarray}
		|\sum_{j\notin J_1}( c_{nj} - c_{ij})| 
		&=&|\sum_{j\notin J_1}(\tla_n|\langle \tbv_n, \tbv_j\rangle| - \tla_i|\langle \tbv_i, \tbv_j\rangle|)\tla_j|\crcr
		&\le&|\sum_{j\notin J_1}(|\langle\tla_n \tbv_n - \tla_i \tbv_i, \tbv_j\rangle|)\tla_j|\crcr
		&\le&\sum_{j\notin J_1}\|\tla_n \tbv_n - \tla_i \tbv_i\|\| \tbv_j\|\tla_j\crcr
		&=&\sum_{j\notin J_1}\|\tla_n \tbv_n - \tla_i \tbv_i\|\tla_j
	\end{eqnarray}
	By definition of indices $n$ and $i$, \eqref{noise} and  \eqref{select}, the following
	bound is found
	\begin{eqnarray}
		\|\tla_n \tbv_n - \tla_i \tbv_i\|^2 = \tla_n^2+\tla_i^2 - 2\langle \tbv_n,\tbv_i\rangle \le \epsilon
	\end{eqnarray}
	Thus, for $\sqrt{\epsilon} \le \frac{1}{m_1-l}, \sum_{j\notin J_1}\|\tla_n \tbv_n - \tla_i \tbv_i\|\le 1$, using lemma \ref{lem1} we have
	\begin{equation}
		|\sum_{j\notin J_1}( c_{nj}' - c_{ij}')| \le \sqrt{\log(m_1-l)}
	\end{equation}
	with probability at least $1 - e(m_1-l)^{-c_1}$ for a constant $c_1>0$.
	
	\qed
\end{proof}
\subsection{Proof of theorem \ref{thm2}}
\label{app:proofthm2}

\begin{theorem}\label{thm2}
	For $i\in\bar{J}_1$, if $\la = \Omega(\mu m_1)$,
	\begin{equation}
		d_i\le \frac{l}{m_1} + \sqrt{\log(m_1-l)}
	\end{equation} 
	with probability at least $1-e(m_1-l)^{-c_1}$ with $c_1>0$. 
\end{theorem}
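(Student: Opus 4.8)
The plan is to bound $d_i=\sum_{j\in[m_1]}c_{ij}$ for a fixed noise index $i\in\bar J_1$ by splitting the marginal sum according to whether the partner index lies in the cluster, exactly mirroring the decomposition used in the proof of Theorem \ref{thm1}:
\begin{equation}
	d_i=\sum_{j\in J_1}c_{ij}+\sum_{j\in\bar J_1}c_{ij}.
\end{equation}
I would treat the two sums by different mechanisms: the first by a deterministic bound on the normalised top eigenvalue $\tla_i$, and the second by the concentration Lemma \ref{lem1}.

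For the first sum, I would use that for every $j$ one has $\tla_j=\underline\la_j/\la\le 1$ and $|\langle\tbv_i,\tbv_j\rangle|\le 1$, so that $c_{ij}=\tla_i\tla_j|\langle\tbv_i,\tbv_j\rangle|\le\tla_i$, whence $\sum_{j\in J_1}c_{ij}\le l\,\tla_i$. The decisive point is then to show $\tla_i\le 1/m_1$. Since $i\in\bar J_1$, the slice $C_i=Z_i^tZ_i$ is a pure white Wishart matrix, so its top eigenvalue $\underline\la_i$ concentrates around $\mu$ via the Tracy--Widom limit \eqref{convergenceDistribution2}; combined with the standing hypothesis $\la=\Omega(\mu m_1)$ this yields $\tla_i=\underline\la_i/\la\le 1/m_1$ with high probability, and therefore $\sum_{j\in J_1}c_{ij}\le l/m_1$.

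For the second sum, I would set $a_j=\tla_i|\langle\tbv_i,\tbv_j\rangle|$ for $j\in\bar J_1$ and rewrite
\begin{equation}
	\sum_{j\in\bar J_1}c_{ij}=\frac{1}{\la}\sum_{j\in\bar J_1}a_j\,\underline\la_j.
\end{equation}
Because $a_j\le\tla_i\le 1/m_1$, the coefficients satisfy $\sum_{j\in\bar J_1}a_j\le(m_1-l)/m_1\le 1$, so the first case of Lemma \ref{lem1} applies with $n=m_1-l$ and $\la=\Omega(\mu)$ (implied by $\la=\Omega(\mu m_1)$), giving $\tfrac{1}{\la}\sum_{j\in\bar J_1}a_j\underline\la_j\le\sqrt{\log(m_1-l)}$ with probability at least $1-e(m_1-l)^{-c_1}$. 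Adding the two estimates produces $d_i\le l/m_1+\sqrt{\log(m_1-l)}$ on the same event.

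The step I expect to be the main obstacle is making the inequality $\tla_i\le 1/m_1$ rigorous: it rests on turning the Tracy--Widom (gamma) concentration of the noise eigenvalue $\underline\la_i$ into a clean high-probability upper bound $\underline\la_i\lesssim\mu$ and then balancing it against the constant hidden in $\la=\Omega(\mu m_1)$. This same bound is also what guarantees $\sum_{j\in\bar J_1}a_j\le 1$, so that the favourable first branch of Lemma \ref{lem1} (rather than the weaker $\sqrt{n}$ branch) can be invoked; one must check that the probabilistic event for the eigenvalue bound and the event from Lemma \ref{lem1} can be intersected without degrading the stated $1-e(m_1-l)^{-c_1}$ guarantee.
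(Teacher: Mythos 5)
Your proposal is correct and follows essentially the same route as the paper's proof: the same split $d_i=\sum_{j\in J_1}c_{ij}+\sum_{j\in\bar J_1}c_{ij}$, the same bound $\sum_{j\in J_1}c_{ij}\le l\,\tla_i\le l/m_1$, and the same invocation of the first branch of Lemma \ref{lem1} with coefficients summing to at most $1$ for the noise part. The one obstacle you flag --- making $\tla_i\le 1/m_1$ rigorous and intersecting events --- is handled in the paper deterministically rather than probabilistically: it uses the fact that the gamma approximation of the Tracy--Widom law has bounded support $[x_0,\,2k\theta+x_0]$, so $\underline\la_i\le \sigma(2k\theta+x_0)+\mu$ holds outright and, with $\la=\Omega(\mu m_1)$, gives $\tla_i\le 1/m_1$ with no extra event, leaving Lemma \ref{lem1} as the sole source of the probability $1-e(m_1-l)^{-c_1}$.
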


\begin{proof} 
	
	Starting by \eqref{26}, we 
	recast $d_i$ in the form 
	$d_i = \Big(\sum_{r\in J_1} +\sum_{r\in\bar{J}_1}\Big)  \tla_r\tla_i|\langle\tbv_r, \tbv_i\rangle|
	$. Then, we bound 
	$$\sum_{r\in J_1} \tla_r\tla_i|\langle\tbv_r, \tbv_i\rangle| = \tla_i\sum_{r\in J_1} \tla_r|\langle\tbv_r, \tbv_i\rangle|\le\tla_i\sum_{r\in J_1} \tla_r\|\tbv_r\| \|\tbv_i\| \le l \tla_i$$
	The second step uses \cite{ietVlokOlivier2012}, 
	that states that the support of the pdf of $F_1$ is within $[x_0,2k\theta + x_0]$. Therefore, we have 
	\begin{equation}
		\frac{ \la_i}{\la}\le\frac{\sigma(2k\theta +x_0) + \mu}{\la}. \label{1/m_1}
	\end{equation}
	Since $\la = \Omega(\mu m_1)$, then $\frac{\sigma(2k\theta +x_0) + \mu}{\la} \le \frac{1}{m_1}$. 
	Hence, 
	\begin{equation}
		\sum_{r\in J_1} \tla_r\tla_i|\langle\tbv_r, \tbv_i\rangle| \le \frac{l}{m_1}\,, 
		\qquad\quad 
		\sum_{r\in\bar{J}_1} \tla_r\tla_i|\langle\tbv_r, \tbv_i\rangle| \le  
		\tla_i\sum_{r\in \bar{J}_1} \tla_r 
	\end{equation}
	The previous inequality \eqref{1/m_1}
	leads us to 
	\begin{equation}
		\tla_i\sum_{r\in \bar{J}_1} \tla_r \le \frac{1}{m_1}\sum_{r\in \bar{J}_1} \tla_r\quad\text{ and }\quad \frac{|\bar{J}_1|}{m_1}\le 1
	\end{equation}
	Using lemma \ref{lem1} with $\la = \Omega(\mu m_1)$, we come to 
	\begin{equation}
		\sum_{r\in\bar{J}_1} \tla_r\tla_i|\langle\tbv_r, \tbv_i\rangle| \le\sqrt{\log(m_1 -l)}
	\end{equation}
	with probability at least $1- e(m_1-l)^{-c_1}$ for a constant $c_1>0$.
	
	\qed \end{proof}

\section{Sufficient conditions for rank one matrix similarity}
\label{proofCiCj}

We discuss in this appendix a
sufficient condition that
minimizes the norm distance 
between the slices $C_i$. 
We use the notation 
$\tla_i$ for the 
normalized top eigenvalue 
and $\tbv_i$ the top eigenvector of the matrix slice $C_i$.

\begin{proposition}\label{CiCj}
	Let two slices be coined by $i$ and $j$. 
	For $\tla_i = \tla_j$, 
	and $\langle\tbv_i, \tbv_j\rangle = 1$,	$\|C_i - C_j\| $ is minimal. 
\end{proposition}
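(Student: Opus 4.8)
The plan is to work in the rank-one regime justified by Lemma~\ref{lemCvv}, i.e. to treat each slice as $C_i = \tla_i\,\tbv_i\tbv_i^t$ with normalized top eigenvalue $\tla_i>0$ and unit top eigenvector $\tbv_i$, and to produce a closed-form expression for the operator norm $\|C_i-C_j\|$ in terms of exactly the two quantities the statement isolates: the eigenvalue gap $\tla_i-\tla_j$ and the angle $\theta_{ij}$ between the top eigenvectors, with $\cos\theta_{ij}=\langle\tbv_i,\tbv_j\rangle$. Once this formula is available, the minimization is immediate and the stated conditions fall out.

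First I would reduce the problem to two dimensions. The matrix $C_i-C_j=\tla_i\,\tbv_i\tbv_i^t-\tla_j\,\tbv_j\tbv_j^t$ is symmetric and its range is contained in $\mathrm{span}(\tbv_i,\tbv_j)$, a subspace of dimension at most two; hence its nonzero eigenvalues, and therefore its operator norm, coincide with those of its restriction to that subspace. Choosing an orthonormal basis in which $\tbv_i=\mathbf{e}_1$ and $\tbv_j=\cos\theta_{ij}\,\mathbf{e}_1+\sin\theta_{ij}\,\mathbf{e}_2$, the restriction is an explicit $2\times2$ symmetric matrix whose trace is $\tla_i-\tla_j$ and whose determinant is $-\tla_i\tla_j\sin^2\theta_{ij}\le0$. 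Reading off the eigenvalues from the trace and determinant (the sign of the determinant forces them to have opposite signs), I obtain
\begin{equation}
\|C_i-C_j\|=\frac{|\tla_i-\tla_j|+\sqrt{(\tla_i-\tla_j)^2+4\tla_i\tla_j\sin^2\theta_{ij}}}{2}.
\end{equation}

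The minimization is then transparent: the right-hand side is a sum of two nonnegative terms, so it is minimal exactly when each vanishes. The condition $|\tla_i-\tla_j|=0$ gives $\tla_i=\tla_j$, and, since $\tla_i,\tla_j>0$, the square-root term vanishes only when $\sin^2\theta_{ij}=1-\langle\tbv_i,\tbv_j\rangle^2=0$, that is $\langle\tbv_i,\tbv_j\rangle=1$ once the sign ambiguity of the top eigenvector is fixed. These are precisely the hypotheses of the proposition, and under them $C_i=C_j$, so the minimal value is $0$, which is obviously the global minimum of a norm.

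The computation itself is elementary; the step needing the most care is the two-dimensional reduction, specifically the degenerate case $\sin\theta_{ij}=0$, where $\tbv_i\parallel\tbv_j$ and the rank of $C_i-C_j$ drops to one. I would dispatch this case directly, noting that there $\|C_i-C_j\|=|\tla_i-\tla_j|$, in agreement with the displayed formula, which confirms that driving $\tla_i\to\tla_j$ along $\theta_{ij}=0$ sends the norm to its minimum. A secondary, purely expository point is that the rank-one form for $C_i$ is the approximation furnished by Lemma~\ref{lemCvv}; the remainder $W_i$ yields a correction that is negligible as $n\to\infty$ and does not shift the location of the minimum, so the qualitative conclusion is unaffected.
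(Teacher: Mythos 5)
Your proof is correct, and it takes a genuinely different route from the paper's. The paper sets $\bx = \tla_i\tbv_i-\tla_j\tbv_j$, rewrites $\tla_j\tbv_j\tbv_j^t$ in terms of $\tla_i\tbv_i$ and $\bx$, and splits $\|\tla_i\tbv_i\tbv_i^t-\tla_j\tbv_j\tbv_j^t\|$ by the triangle inequality into three pieces controlled by $|\tla_i-\tla_i^2/\tla_j|$ and $\|\bx\|_2^2=\tla_i^2+\tla_j^2-2\tla_i\tla_j\langle\tbv_i,\tbv_j\rangle$; it then obtains the stated conditions by setting both quantities to zero. Strictly speaking, minimizing a triangle-inequality \emph{upper bound} does not in general locate the minimizer of the norm itself; the paper's argument is rescued only because the bound is driven to zero, which forces the norm to its global minimum $0$. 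Your two-dimensional reduction avoids that weakness entirely: restricting the symmetric matrix to $\mathrm{span}(\tbv_i,\tbv_j)$ and reading eigenvalues off the trace $\tla_i-\tla_j$ and determinant $-\tla_i\tla_j\sin^2\theta_{ij}$ yields the exact value
\begin{equation*}
\|\tla_i\tbv_i\tbv_i^t-\tla_j\tbv_j\tbv_j^t\|=\frac{|\tla_i-\tla_j|+\sqrt{(\tla_i-\tla_j)^2+4\tla_i\tla_j\sin^2\theta_{ij}}}{2},
\end{equation*}
(both the trace/determinant computation and the opposite-sign eigenvalue argument check out), so you characterize the minimum exactly and show it is attained \emph{only} under the stated conditions, up to the eigenvector sign convention --- a strictly stronger conclusion than the paper's sufficient-condition argument, and the closed form could also be used quantitatively to relate $c_{ij}\ge 1-\epsilon/2$ to a bound on $\|C_i-C_j\|$. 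What the paper's route buys instead is that its intermediate quantity $\|\bx\|_2^2$ is precisely the expression $\|\tla_n\tbv_n-\tla_i\tbv_i\|^2\le\epsilon$ reused in the proof of theorem \ref{thm1}, so its decomposition feeds directly into the $\epsilon$-similarity machinery. Note finally that both proofs work with the normalized rank-one surrogate $\tla_i\tbv_i\tbv_i^t$ in place of $C_i$ itself (the paper does so silently, despite the proposition being phrased for $\|C_i-C_j\|$); your closing remark invoking lemma \ref{lemCvv} to justify neglecting the remainder $W_i$ makes this shared step explicit, which is a point in your write-up's favor.
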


\begin{proof} 
	
	Let $\bx = \tla_i\tbv_i-\tla_j\tbv_j$.
	\begin{figure}[ht]
		\centering
		\begin{subfigure}[b]{0.3\textwidth}
			\centering
			\includegraphics[width=\textwidth ]{./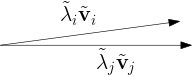}
			\caption{}
			\label{fig:a}
		\end{subfigure}
		\hfill
		\begin{subfigure}[b]{0.3\textwidth}
			\centering
			\includegraphics[width=\textwidth]{./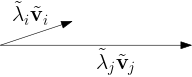}
			\caption{}
			\label{fig:b}
		\end{subfigure}
		\hfill
		\begin{subfigure}[b]{0.3\textwidth}
			\centering
			\includegraphics[width=\textwidth]{./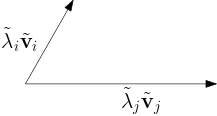}
			\caption{}
			\label{fig:c}
		\end{subfigure}
		\caption{Representation of the relation between the top eigenvectors and eigenvalues of the slice $i$ and the slice $j$.}
		\label{fig:three graphs}
	\end{figure}
	For $i,j \in [m_1]$,
	\begin{eqnarray}
		\|\tla_{i} \tbv_{i}\tbv_{i}^t - \tla_{j} \tbv_{j}\tbv_{j}^t\| &=& \|\tla_{i} \tbv_{i}\tbv_{i}^t - (\tla_{i} \tbv_{i} + \bx)(\tla_i\tbv_{i} + \bx)^t/\tla_j\|\label{signal} \\
		&=& \| (\tla_{i}- \frac{\tla_{i}^2}{\tla_j})\tbv_{i}\bv_{i}^t +\frac{\tla_i}{\tla_j}(\bx\tbv_{i}^t +\tbv_{i}\bx^t) + \frac{1}{\tla_j}\bx\bx^t\|\crcr
		&\leq&\| (\tla_{i}- \frac{\tla_{i}^2}{\tla_j})\tbv_{i}\tbv_{i}^t\| +\|\frac{\tla_i}{\tla_j}(\bx\tbv_{i}^t +\tbv_{i}\bx^t)\| +\| \frac{1}{\tla_j}\bx\bx^t\|
		\nonumber
	\end{eqnarray}
	For the first term, we have
	\begin{equation}
		\| (\tla_{i}- \frac{\tla_{i}^2}{\tla_j})\tbv_{i}\tbv_{i}^t\|  = |\tla_{i}- \frac{\tla_{i}^2}{\tla_j}|\| \tbv_{i}\tbv_{i}^t\| = |\tla_{i}- \frac{\tla_{i}^2}{\tla_j}|  \label{first}
	\end{equation}
	We deal with the second term:
	\begin{eqnarray}
		\|\bx\tbv_{i}^t\| = \sup_{\bu\in S^{m_3-1}} \|\bx\tbv_{i}^t\bu\|_2 = \sup_{\bu\in S^{m_3-1}} |\langle \tbv_{i},\bu\rangle|\|\bx\|_2 = \|\bx\|_2
	\end{eqnarray}
	Likewise $	\|\tbv_{i}\bx^t\| = \|\bx\|_2$. 
	The third expression can be
	expressed as
	\begin{equation}\| \bx\bx^t\| = \sup_{\bu\in S^{m_3-1}} \|\bx\bx^t\bu\|_2  \label{noise_op} \end{equation}
	The upper bound of \eqref{noise_op} is reached at $\bu = \bx/\|\bx\|^2_2$. Therefore,
	\begin{eqnarray}
		\| \bx\bx^t\| &=& \| \bx\bx^t\frac{\bx}{\|\bx\|_2}\|_2 
		=\|\bx\|_2^2\crcr
		&=& \tla_i^2 + \tla_j^2 - 2\tla_i\tla_j \langle\tbv_i, \tbv_j\rangle \label{noise}
	\end{eqnarray}
	Thus to minimize \eqref{signal}, we inspect the following expressions: 
	\begin{equation}
		\left \{
		\begin{array}{rcl}
			&|\tla_{i}- \frac{\tla_{i}^2}{\tla_j}| \\
			&\tla_i^2 + \tla_j^2 - 2\tla_i\tla_j \langle\tbv_i, \tbv_j\rangle.
		\end{array}
		\right.
		\label{equiv_signal} 
	\end{equation}
	To reach the minimum, 
	we set 	 
	$|\tla_{i}- \frac{\tla_{i}^2}{\tla_j}| = 0 $ and that entails $\tla_i = \tla_j$. 
	The second term is again set to 
	0 and we collect 	 		\begin{eqnarray}\tla_i^2 + \tla_j^2 - 2\tla_i\tla_j \langle\tbv_i, \tbv_j\rangle= 0 \Rightarrow 2\tla_i(1-\langle\tbv_i, \tbv_j\rangle) = 0
	\end{eqnarray}
	Therefore, the second condition for 
	having a minimal norm
	between slices 
	amounts to $\langle\tbv_i, \tbv_j\rangle = 1$.

	\qed\end{proof}

\end{document}